\documentclass[final,12pt]{arxiv}

\usepackage{times}
\usepackage{mathtools}
\usepackage[mathscr]{euscript}
\usepackage{thmtools}
\usepackage{thm-restate}
\usepackage{booktabs}  
\usepackage{caption}
\usepackage{graphicx}
\usepackage{wrapfig}
\usepackage{multirow}

\usepackage{bm}

\usepackage{MnSymbol}
\DeclareMathAlphabet\mathbb{U}{msb}{m}{n}
\usepackage{xpatch}

\def\Rset{\mathbb{R}}

\DeclareMathOperator*{\E}{\mathbb E}
\DeclareMathOperator*{\argmax}{argmax}
\DeclareMathOperator*{\argmin}{argmin}

\DeclarePairedDelimiter{\abs}{\lvert}{\rvert} 
\DeclarePairedDelimiter{\bracket}{[}{]}
\DeclarePairedDelimiter{\curl}{\{}{\}}

\DeclarePairedDelimiter{\paren}{(}{)}

\hypersetup{
  breaklinks   = true, 
  colorlinks   = true, 
  urlcolor     = blue, 
  linkcolor    = blue, 
  citecolor    = blue 
}

\newcommand{\sC}{{\mathscr C}}
\newcommand{\sD}{{\mathscr D}}
\newcommand{\sE}{{\mathscr E}}
\newcommand{\sF}{{\mathscr F}}

\newcommand{\sH}{{\mathscr H}}

\newcommand{\sM}{{\mathscr M}}

\newcommand{\sX}{{\mathscr X}}
\newcommand{\sY}{{\mathscr Y}}

\newcommand{\sfL}{{\mathsf L}}

\newcommand{\ov}{\overline}
\newcommand{\wt}{\widetilde}

\newcommand{\ignore}[1]{}

\newcommand{\hh}{{\sf h}}

\usepackage[toc, page, header]{appendix}
\setcounter{tocdepth}{0}

\newcommand{\ldef}{{\sfL_{\rm{def}}}}

\newcommand{\lsc}{{\sfL}}

\newcommand{\expert}{{g}}
\newcommand{\expertexpert}{{\sf g}}
\newcommand{\sur}{\sfL}

\setlength {\marginparwidth }{2cm} 

\title[Realizable Learning to Defer]{Realizable $\sH$-Consistent and Bayes-Consistent Loss Functions for Learning to Defer}

\arxivauthor{%
 \Name{Anqi Mao} \Email{aqmao@cims.nyu.edu}\\
 \addr Courant Institute of Mathematical Sciences, New York%
 \AND
 \Name{Mehryar Mohri} \Email{mohri@google.com}\\
 \addr Google Research and Courant Institute of Mathematical Sciences, New York%
 \AND
 \Name{Yutao Zhong} \Email{yutao@cims.nyu.edu}\\
 \addr Courant Institute of Mathematical Sciences, New York%
}

\begin{document}

\maketitle

\begin{abstract}

We present a comprehensive study of surrogate loss functions for
learning to defer. We introduce a broad family of surrogate losses,
parameterized by a non-increasing function $\Psi$, and establish their
realizable $\sH$-consistency under mild conditions. For cost functions
based on classification error, we further show that these losses admit
$\sH$-consistency bounds when the hypothesis set is symmetric and
complete, a property satisfied by common neural network and linear
function hypothesis sets. Our results also resolve an open question
raised in previous work \citep{pmlr-v206-mozannar23a} by proving the
realizable $\sH$-consistency and Bayes-consistency of a specific
surrogate loss. Furthermore, we identify choices of $\Psi$ that lead
to $\sH$-consistent surrogate losses for \emph{any general cost
function}, thus achieving Bayes-consistency, realizable
$\sH$-consistency, and $\sH$-consistency bounds
\emph{simultaneously}. We also investigate the relationship between
$\sH$-consistency bounds and realizable $\sH$-consistency in learning
to defer, highlighting key differences from standard
classification. Finally, we empirically evaluate our proposed
surrogate losses and compare them with existing baselines.

\end{abstract}



\section{Introduction}
\label{sec:intro}

In many practical scenarios, combining expert insights with
established models can yield significant enhancements. These experts
can be human domain specialists or more complex, albeit
resource-intensive, models. For example, modern language and dialogue
models are prone to producing \emph{hallucinations} or inaccurate
information.  The quality of their responses can be significantly
enhanced by delegating uncertain predictions to more specialized or
advanced pre-trained models. This problem is particularly crucial for
large language models (LLMs), as noted in \citep{WeiEtAl2022,
  bubeck2023sparks}. The same principle applies to other generative
systems, like those for images or videos, and to learning models in
diverse applications such as image classification, annotation, and
speech recognition.  Thus, the task of \emph{learning to defer} (L2D)
with experts has become increasingly critical across a wide array of
applications.

Directly optimizing the deferral loss function, which is the target
loss in L2D, is
computationally intractable for many choices of the hypothesis
set. Therefore, a common approach is to optimize a surrogate loss that
facilitates the optimization of the deferral loss function.  Recent
work in L2D has proposed several surrogate losses
\citep{mozannar2020consistent,verma2022calibrated,pmlr-v206-mozannar23a,
  MaoMohriZhong2024deferral} and studied their consistency guarantees,
including Bayes-consistency, realizable $\sH$-consistency, and
$\sH$-consistency bounds (see definitions in
Section~\ref{sec:pre-consistency}). In particular,
\citet{mozannar2020consistent} proposed the first Bayes-consistent
surrogate loss by generalizing the cross-entropy loss for
L2D. \citet{verma2022calibrated} proposed an alternative
Bayes-consistent surrogate loss by generalizing the one-versus-all
loss for L2D. \citet{pmlr-v206-mozannar23a} showed that these
surrogate losses are not realizable $\sH$-consistent. They proposed an
alternative surrogate loss that is realizable $\sH$-consistent, but
they were unable to prove or disprove whether the proposed surrogate
loss is Bayes-consistent. All the surrogate losses mentioned above and
their consistency guarantees hold only for cost functions based on
classification error.  \citet{MaoMohriZhong2024deferral} generalized
the surrogate loss in \citep{mozannar2020consistent} to incorporate
general cost functions and any multi-class surrogate losses. They
provided $\sH$-consistency bounds for the novel family of surrogate
losses, offering a stronger guarantee than Bayes-consistency.

However, none of these surrogate losses satisfies all these guarantees
simultaneously. In particular, a recent AISTATS notable award paper by
\citet{pmlr-v206-mozannar23a} left open the problem of finding
surrogate losses that are both Bayes-consistent and realizable
$\sH$-consistent when the cost function for the expert is its
classification error. The problem becomes even more challenging when
considering more general and realistic cost functions.

We present a comprehensive analysis of surrogate loss functions for
L2D. Our contributions address the limitations of
previous approaches and provide a unified framework for designing
surrogate losses with strong theoretical guarantees.
In Section~\ref{sec:sur}, we first introduce a broad family of
surrogate losses for L2D, derived from first principles
(Section~\ref{sec:derivation}). This family is parameterized by a
non-increasing function $\Psi$, which provides some flexibility in
tailoring the loss function to specific requirements. We establish
that under mild conditions on $\Psi$, these surrogate losses achieve
realizable $\sH$-consistency, a key guarantee for many applications
(Section~\ref{sec:re-H}).

Next, for cost functions based on classification error, we further
establish that our surrogate loss functions admit $\sH$-consistency
bounds when the hypothesis set is symmetric and complete
(Section~\ref{sec:H-bounds}). This result holds for commonly used
neural network and linear function hypothesis sets, further
strengthening the applicability of our results. Additionally, our
results resolve an open question raised by
\citet{pmlr-v206-mozannar23a} by proving the realizable
$\sH$-consistency and Bayes-consistency of their proposed surrogate
loss, which the authors had left as an open question
(Section~\ref{sec:bayes-consistency}).

In Section~\ref{sec:H-bounds}, we further identify specific choices of
$\Psi$, such as the one corresponding to the mean absolute error loss,
that lead to $\sH$-consistent surrogate losses for \emph{any general
cost function}.  These loss functions are adapted to general cost
functions and benefit from Bayes-consistency
(Section~\ref{sec:bayes-consistency}), realizable $\sH$-consistency,
and $\sH$-consistency bounds \emph{simultaneously}.

In Section~\ref{sec:relation}, we also study the relationship between
$\sH$-consistency bounds and realizable $\sH$-consistency in the
context of L2D, highlighting key distinctions from the
standard classification setting.
Finally, we further report the results of experiments with our new
surrogate losses and their comparison with the baselines in different
settings (Section~\ref{sec:experiments}).

We discuss the related work in Section~\ref{sec:related-work} and then
begin with the preliminaries in Section~\ref{sec:pre}.

\section{Related work}
\label{sec:related-work}

The approach of \emph{single-stage learning to defer}, where a
predictor and a deferral function are trained together, was pioneered
by \citet*{CortesDeSalvoMohri2016,CortesDeSalvoMohri2016bis,
  CortesDeSalvoMohri2023} and further developed in subsequent studies
on abstention, where the cost is constant
\citep{charoenphakdee2021classification,
  caogeneralizing,li2023no,cheng2023regression,
  MaoMohriZhong2024score,MaoMohriZhong2024predictor,
  MohriAndorChoiCollinsMaoZhong2024learning} and on deferral, where
the cost can vary depending on the instance and the label
\citep{mozannar2020consistent,verma2022calibrated,
  pmlr-v206-mozannar23a,verma2023learning,
  cao2023defense,MaoMohriMohriZhong2023two,
  MaoMohriZhong2024deferral}. In this approach, the deferral function
determines whether to defer to an expert for each input. This approach
has been shown to be superior to \emph{confidence-based} approaches,
where the decision to abstain or defer is based solely on the
magnitude of the predictor’s value
\citep{Chow1957,chow1970optimum,bartlett2008classification,
  yuan2010classification,WegkampYuan2011, ramaswamy2018consistent,
  NiCHS19,jitkrittum2023does}; and to \emph{selective classification}
approaches, where the selection rate is fixed and a cost function
modeled by an expert cannot be taken into account
\citep{el2010foundations,el2012active,wiener2011agnostic,
  wiener2012pointwise,wiener2015agnostic,geifman2017selective,
  geifman2019selectivenet,acar2020budget,gangrade2021selective,
  zaoui2020regression,jiang2020risk,shah2022selective}.

\citet{madras2018learning} initiated the \emph{learning to defer}
(L2D) problem scenario, which integrates human expert decisions into
the cost function. This approach has been further explored in
subsequent studies \citep{raghu2019algorithmic,wilder2021learning,
  pradier2021preferential}. \citet{mozannar2020consistent} introduced
the first Bayes-consistent surrogate loss for L2D, which was further
refined in
\citep{raman2021improving,liu2022incorporating}. \citet{verma2022calibrated}
proposed an alternative Bayes-consistent surrogate loss, the
one-versus-all loss, which was later examined within a broader family
of loss functions \citep{charusaie2022sample}. \citet{cao2023defense}
proposed an asymmetric softmax function, which can induce a valid
probability estimator for learning to
defer. \cite{pmlr-v206-mozannar23a} showed that the surrogate losses
in \citep{mozannar2020consistent,verma2022calibrated} are not
realizable $\sH$-consistent. They proposed an alternative surrogate
loss that is realizable $\sH$-consistent, but they were unable to
prove or disprove whether the proposed surrogate loss is
Bayes-consistent. All the surrogate losses mentioned above and their
consistency guarantees hold only for cost functions based on
classification error.  \citet{MaoMohriZhong2024deferral} generalized
the surrogate loss in \citep{mozannar2020consistent} to incorporate
general cost functions and any multi-class surrogate losses. They
provided $\sH$-consistency bounds for the novel family of surrogate
losses, offering a stronger guarantee than Bayes-consistency.

Additional studies have focused on post-hoc methods, with
\citet{okati2021differentiable} suggesting an alternative optimization
technique between the predictor and rejector,
\citet{narasimhanpost} offering corrections for underfitting surrogate
losses \citep{liu2024mitigating}, and \citet{Mohammad2024} providing a unifying post-processing framework for multi-objective L2D based on a generalization of the Neyman-Pearson Lemma \citep{neyman1933ix}.  The L2D framework or variations
thereof have found applications in diverse scenarios, spanning
regression, reinforcement learning, and human-in-the-loop systems,
among others
\citep{de2020regression,de2021classification,straitouri2021reinforcement,
  zhao2021directing,joshi2021pre,gao2021human,mozannar2022teaching,
  hemmer2023learning,chen2024learning,palomba2024causal}. More recently, the problem of
\emph{learning to defer with multiple experts} has been analyzed in
several publications
\citep{hemmer2022forming,keswani2021towards,kerrigan2021combining,
  straitouri2022provably,benz2022counterfactual,verma2023learning,
  MaoMohriMohriZhong2023two,MaoMohriZhong2024deferral,
  mao2024regression,tailor2024learning}. Meanwhile,
\citet{MaoMohriMohriZhong2023two} also proposed a \emph{two-stage
learning to defer} framework. They introduced two-stage surrogate
losses that are both Bayes-consistent and realizable $\sH$-consistent
with constant costs. However, realizable $\sH$-consistency does not
hold for cost functions based on classification error. As with
\citep{mozannar2020consistent,verma2022calibrated,pmlr-v206-mozannar23a},
our work focuses on the single-stage and single-expert setting, and we
plan to explore a similar approach in a multi-expert/two-stage setting
in the future.

\newpage
\section{Preliminaries}
\label{sec:pre}

We start with the definitions and notations used in the
learning-to-defer scenario considered in this paper. We will then
introduce consistency guarantees, including \emph{Bayes consistency},
\emph{Realizable $\sH$-consistency}, and \emph{$\sH$-consistency
bounds}. Finally, we will review existing consistent surrogate losses
for L2D.

\subsection{Learning to defer: problem setup}
\label{sec:pre-defer}

Let $\sX$ be an input space and $\sY = [n] := \curl*{1, \ldots, n}$ be
the label space in the standard multi-class classification setting. We
study the \emph{learning to defer} (L2D) scenario, where a learner can
either predict a label from $\sY$ or defer to an expert.

To model this, we introduce an augmented label space $\ov \sY =
\curl*{1, \ldots, n, n + 1}$, where the label $n + 1$ corresponds to
deferral. An expert is a fixed predictor $\expert \colon \sX \times
\sY \to \Rset$.  The goal of L2D is to select a predictor $h$ out of a
hypothesis set $\sH$ of functions mapping from $\sX \times \ov \sY$ to
$\Rset$ with small expected \emph{deferral loss}.
Let $\hh(x)$ denote the prediction of $h$ on input $x \in \sX$,
defined as $\hh(x) = \argmax_{y \in \ov \sY} h(x, y)$, that is the
label in the augmented label space $\ov \sY$ with the highest score,
with an arbitrary but fixed deterministic strategy for breaking ties.
Then, the \emph{deferral loss function} $\ldef$ is defined as
follows:
\begin{equation*}
\forall (x, y) \in \sX \times \sY,
\quad \ldef(h, x, y) 
= 1_{\hh(x) \neq y} 1_{\hh(x) \in [n]} + c(x, y) 1_{\hh(x) = n + 1},
\end{equation*}
where $c(x, y)$ is the the cost of deferring on input $x$
with true label $y$.
If the deferral option is selected, that is $\hh(x) = n + 1$, the
deferral cost $c(x, y)$ is incurred.  Otherwise, the prediction of
$h$ is within the standard label space, $\hh(x) \in [n]$, and the
loss incurred coincides with the standard zero-one classification
loss, $1_{\hh(x) \neq y}$.

The choice of the cost function $c$ is
flexible. For example, the cost can be defined as the expert's
classification error: $c(x, y) = 1_{\expertexpert(x) \neq y}$, as in
previous work
\citep{mozannar2020consistent,verma2022calibrated,pmlr-v206-mozannar23a}.
Here, $\expertexpert(x) = \argmax_{y\in [n]}\expert(x,y)$ is the
prediction made by the expert $\expert$.  More generally, it can
incorporate the inference cost for the expert
\citep{MaoMohriZhong2024deferral}: $c(x, y) = \alpha
1_{\expertexpert(x) \neq y} + \beta$, with $\alpha, \beta > 0$. We
assume, without loss of generality, that the cost is bounded by 1: $0
\leq c(x, y) \leq 1 $, which can be achieved through normalization in
practice.

\subsection{Consistency guarantees}
\label{sec:pre-consistency}

Directly optimizing the deferral loss function, which is the target
loss in L2D, is generally computationally intractable for for complex 
hypothesis sets $\sH$. Therefore, a common approach is
to optimize a surrogate loss that facilitates the optimization of the
deferral loss function. A natural learning guarantee for such
surrogate losses is \emph{Bayes-consistency}
\citep{Zhang2003,bartlett2006convexity,zhang2004statistical,tewari2007consistency,steinwart2007compare}:
\begin{definition}[Bayes-consistency]
A surrogate loss $\sfL$ is Bayes-consistent with respect to $\ldef$, if minimizing the surrogate loss over the family of all measurable functions leads to the minimization of the deferral loss:
\begin{equation*}
\lim_{n \to \plus \infty } \sE_{\sfL} (h_n) - \sE^*_{\sfL}(\sH_{\rm{all}}) = 0 \implies \lim_{n \to \plus \infty } \sE_{\ldef}(h_n) - \sE^*_{\ldef}(\sH_{\rm{all}}) = 0.
\end{equation*}
\end{definition}
Here, given a distribution $\sD$ over $\sX \times \sY$ and a loss
function $\sfL \colon \sH \times \sX \times \sY \to \Rset$, we denote
by $\sE_{\sfL} (h)$ the \emph{generalization error} of a hypothesis $h
\in \sH$, $\sE_{\lsc}(h) = \E_{(x, y) \sim \sD}[\lsc(h, x, y)]$, and
by $\sE^*_{\sfL}(\sH)$ the \emph{best-in-class generalization error},
$\sE^*_{\lsc}(\sH) = \inf_{h \in \sH}
\sE_{\lsc}(h)$. Bayes-consistency assumes that the optimization occurs
over the family of all measurable functions,
$\sH_{\rm{all}}$. However, in practice, the hypothesis set of interest
is typically a restricted one, such as a family of neural
networks. Therefore, a hypothesis-dependent learning guarantee, such
as \emph{$\sH$-consistency bounds}
\citep{awasthi2022Hconsistency,awasthi2022multi} (see also
\citep{AwasthiMaoMohriZhong2023theoretically,awasthi2023dc,MaoMohriZhong2023characterization,MaoMohriZhong2023structured,mao2023cross,zheng2023revisiting,MaoMohriZhong2023ranking,MaoMohriZhong2023rankingabs,mao2024universal,mao2024top,mao2024h,cortes2024cardinality}) and \emph{realizable
$\sH$-consistency} \citep{long2013consistency,zhang2020bayes}, is more
informative and relevant. Realizable $\sH$-consistency, defined as
follows, requires that a minimizer of the surrogate loss over the
given hypothesis set $\sH$ also minimizes the target loss, provided
that the underlying distribution is realizable.
\begin{definition}[Realizable $\sH$-consistency]
\label{def:realizable}
A surrogate loss $\sfL$ is realizable $\sH$-consistent with respect to
$\ldef$, if for any distribution over which there exists a 
predictor $h^* \in \sH$ achieving zero deferral loss, $\sE_{\ldef}(h^*) = 0$, 
minimizing the surrogate loss also leads to a zero-error solution:
\begin{equation*}
\hat h \in \argmin_{h \in \sH} \sE_{\sfL}(h) \implies \sE_{\ldef}(\hat h) = 0.
\end{equation*}
\end{definition}
Note that realizable $\sH$-consistency does not imply
Bayes-consistency, even if we set $\sH = \sH_{\rm{all}}$ in
Definition~\ref{def:realizable}, since Bayes-consistency requires that
the relationship holds for all distributions, not just realizable
ones. \emph{$\sH$-consistency bounds}, on the other hand, always imply
Bayes-consistency.  Given a hypothesis set $\sH$, a surrogate loss
$\sfL$ admits an \emph{$\sH$-consistency bound}, if for some
non-decreasing concave function $\Gamma \colon \Rset_{+}\to \Rset_{+}$ with
$\Gamma(0) = 0$, a bound of the following form holds for any
hypothesis $h\in \sH$ and any distribution:
\begin{align}
\label{eq:est-bound}
\sE_{\ldef}(h) - \sE^*_{\ldef}(\sH)+\sM_{\ldef}(\sH)
\leq \Gamma\paren*{\sE_{\sfL}(h) - \sE^*_{\sfL}(\sH)+\sM_{\sfL}(\sH}),
\end{align}
where $\sM_{\sfL}(\sH)$ is \emph{the minimizability gap}, defined as
the difference between the best-in-class generalization error and the
expected pointwise infimum loss: $\sM_{\sfL}(\sH) = \sE^*_{\sfL}(\sH)
- \mathbb{E}_{x} \bracket* {\inf_{h\in \sH}\E_{y|x}\bracket*{\sfL(h,
    x, y)}}$. The minimizability gap can be upper-bounded by the
approximation error and vanishes when $\sH = \sH_{\rm{all}}$
\citep{awasthi2022Hconsistency,awasthi2022multi}. Thus, an
$\sH$-consistency bound implies Bayes-consistency. The relationship
between the two hypothesis-dependent learning guarantees---realizable
$\sH$-consistency and $\sH$-consistency bounds---depends on the target
loss adopted in the specific learning scenario. In
Section~\ref{sec:relation}, we will demonstrate that in the standard
multi-class classification setting, an $\sH$-consistency bound is a
stronger notion than realizable $\sH$-consistency. However, in
L2D, these guarantees do not imply one another.

\subsection{Existing surrogate losses}
\label{sec:pre-exist-sur}

Here, we will review several consistent surrogate losses used in
L2D. For convenience, we use $\wt c(x, y) =
1_{\expertexpert(x) \neq y}$ to denote the cost when it specifically
represents the expert's classification error, and use $c(x, y)$ when
it represents a general cost function.

\citet{mozannar2020consistent} proposed the first Bayes-consistent
surrogate loss by generalizing the cross-entropy loss for L2D, with
cost functions based on classification error, which is defined as
\begin{equation*}
  \sfL_{\rm{CE}}(h, x, y)
  = - \log \paren*{\frac{e^{h(x, y)}}{\sum_{y' \in \ov \sY} e^{h(x, y')}}}
  - (1 - \wt c(x, y)) \log \paren*{\frac{e^{h(x, n + 1)}}{\sum_{y' \in \ov \sY} e^{h(x, y')}}}.
\end{equation*}
\citet{verma2022calibrated} proposed an alternative one-vs-all
surrogates loss with cost functions based on expert's classification
error, that is Bayes-consistent as well:
\begin{align*}
\sfL_{\rm{OvA}}(h, x, y) 
& = \Phi (h(x, y)) \!+\! \sum_{\substack{y' \in \ov \sY\\ y'\neq y}} \Phi (- h(x, y'))
+ \paren*{1 - \wt c(x, y)} \bracket*{\Phi(h(x, n + 1)) \!-\! \Phi(-h(x, n + 1))},
\end{align*}
where $\Phi$ is a strictly proper binary composite loss
\citep{reid2010composite}, such as the logistic loss $t \mapsto \log(1 + e^{-t})$.
$\sfL_{\rm{CE}}$ and $\sfL_{\rm{OvA}}$ are not realizable
$\sH$-consistent. Instead, \citet{pmlr-v206-mozannar23a} proposed the following loss function
that is realizable $\sH$-consistent when $\sH$ is closed under
scaling:
\begin{equation*}
\sfL_{\rm{RS}}(h, x, y) = - 2 \log \paren*{ \frac{e^{h(x, y)} + (1 - \wt c(x, y)) e^{h(x, n + 1)}}{\sum_{y' \in \ov \sY} e^{h(x, y')}} }.
\end{equation*}
However, they were unable to prove or disprove whether the surrogate loss $\sfL_{\rm{RS}}$ is Bayes-consistent.

All the surrogate losses mentioned above and their consistency
guarantees hold only for cost functions based on the classification
error: $\wt c(x, y) = 1_{\expertexpert(x) \neq
  y}$. \citet{MaoMohriZhong2024deferral} generalized the surrogate
loss $\sfL_{\rm{CE}}$ to incorporate general cost functions and any
multi-class surrogate losses:
\begin{equation*}
\sfL_{\rm{general}}(h, x, y) = \ell(h, x, y) + (1 - c(x, y)) \ell(h, x, n + 1).
\end{equation*}
Here, $\ell$ is a Bayes-consistent surrogate loss for the multi-class
zero-one loss over the augmented label set $\ov \sY$. In particular,
$\ell$ can be chosen as a comp-sum loss \citep{mao2023cross}, for
example, the generalized cross entropy loss (see Section~\ref{sec:derivation}). As
shown by \citet{MaoMohriZhong2024deferral}, $\sfL_{\rm{general}}$
benefits from $\sH$-consistency bounds, which implies its
Bayes-consistency.

\section{Novel surrogate losses}
\label{sec:sur}

In this section, we introduce a new family of surrogate losses for
L2D that benefit from Bayes-consistency, realizable
$\sH$-consistency and $\sH$-consistency bounds, starting from first
principles.

\subsection{Derivation from first principles}
\label{sec:derivation}

Observe that for any $(x, y) \in \sX \times \sY$, we have 
$1_{\hh(x) = n + 1} = 1_{\hh(x) \neq y} 1_{\hh(x) = n + 1}$, since
$\hh(x) = n + 1$ implies $\hh(x) \neq y$.
Thus, using additionally $1_{\hh(x) \in [n]} = 1_{\hh(x) \neq n + 1}$, 
the deferral loss can be rewritten as follows 
for all $(x, y) \in \sX \times \sY$:
\begin{align}
\ldef(h, x, y) 
& = 1_{\hh(x) \neq y} 1_{\hh(x) \in [n]} + c(x, y) 1_{\hh(x) = n + 1} \nonumber \\
& = 1_{\hh(x) \neq y} 1_{\hh(x) \neq n + 1} + c(x, y) 1_{\hh(x) \neq y} 1_{\hh(x) = n + 1} \nonumber \\
& = 1_{\hh(x) \neq y} 1_{\hh(x) \neq n + 1} + c(x, y) 1_{\hh(x) \neq y} (1 - 1_{\hh(x) \neq n + 1}) \nonumber \\
& = c(x, y) 1_{\hh(x) \neq y} + \paren*{1 - c(x, y)} 1_{\hh(x) \neq y \wedge \hh(x) \neq n + 1}.
\label{eq:def}
\end{align}
\ignore{
\begin{align}
\ldef(h, x, y) 
& = 1_{\hh(x) \neq y} 1_{\hh(x) \in [n]} + c(x, y) 1_{\hh(x) = n + 1} \nonumber \\
& = c(x, y) 1_{\hh(x) \neq y} + \paren*{1 - c(x, y)} 1_{\hh(x) \neq y} 1_{\hh(x) \neq n + 1} \nonumber \\
& = c(x, y) 1_{\hh(x) \neq y} + \paren*{1 - c(x, y)} 1_{\hh(x) \neq y \wedge \hh(x) \neq n + 1},
\label{eq:def}
\end{align}
where the second equality is derived by analyzing three cases: $\hh(x) = y$, $\hh(x) = n + 1$ and $\hh(x) \neq y, \hh(x) \neq n + 1$ and the third inequality
by using $1_{A} 1_{B} = 1_{A \cap B}$. Thus, in particular, 
when $c(x, y) = 1$, the loss incurred
is the standard $1_{\hh(x) \neq y}$ and, when $c(x, y) = 0$,  the loss
is $1_{\hh(x) \neq y \wedge \hh(x) \neq n + 1} =
1_{\hh(x) \neq y \wedge \hh(x) \in [n]}$.
}

\ignore{
Intuitively, this means that when $c(x, y) = 1$, indicating that the expert incurs an error, the model learns to predict the true label ($\hh(x) = y$) to incur zero loss. When $c(x, y) = 0$, meaning the expert is accurate, the model learns to either predict the true label ($\hh(x) = y$) or defer the prediction to the expert ($\hh(x) = n + 1$), both of which result in zero loss.
}

Next, we will derive the new surrogate losses for L2D by replacing the indicator functions in \eqref{eq:def} with smooth loss functions. The first indicator function $1_{\hh(x) \neq y}$ is just the multi-class zero-one loss. Thus, a natural choice is to replace it with a surrogate loss in standard multi-class classification. We will specifically consider the family of comp-sum losses \citep{mao2023cross}, defined as follows for any $(h, x, y) \in \sH \times \sX \times \sY$:
\begin{equation*}
\ell_{\rm{comp}}(h, x, y) = \Psi \paren*{\frac{e^{h(x, y)}}{\sum_{y' \in \ov \sY} e^{h(x, y')}}},
\end{equation*}
where $\Psi \colon [0, 1] \to \Rset_{+} \cup
\curl{+\infty}$ is a non-increasing function. For example, by taking $\Psi(t) = -\log(t)$, $\frac{1}{q} \paren*{1 - t^{q}} \text{ with } q\in (0, 1)$, $1 - t$, we obtain the \emph{logistic loss}
\citep{Verhulst1838,Verhulst1845,Berkson1944,Berkson1951},\ignore{the
\emph{sum exponential loss} \citep{weston1998multi,awasthi2022multi},}
the \emph{generalized cross entropy loss}
\citep{zhang2018generalized}, and the \emph{mean absolute error loss}
\citep{ghosh2017robust}, respectively:
\begin{alignat*}{2}
& \text{Logistic loss: } & \ell_{\rm{log}}(h, x, y) & = 
- \log \bracket*{\frac{e^{h(x, y)}}{\sum_{y' \in \ov \sY} e^{h(x, y')}}}\\
& \text{Generalized cross entropy loss: } 
& \ell_{\rm{gce}}(h, x, y) & = \frac{1}{q}\bracket*{1 - \bracket*{\frac{e^{h(x,y)}}
    {\sum_{y'\in \ov \sY} e^{h(x,y')}}}^{q}}\\
& \text{Mean absolute error loss: } 
& \ell_{\rm{mae}}(h, x, y) & =  1 - \frac{e^{h(x, y)}}{\sum_{y' \in \ov \sY} e^{h(x, y')}}.
\end{alignat*}
For any $(h, x, y) \in \sH \times \sX \times \ov \sY$, the confidence
margin $\rho_h(x, y)$ is defined by $\rho_h(x, y) = h(x, y) - \max_{y' \in \ov \sY,
  y' \neq y} h(x, y')$.  Thus, the second indicator function
$1_{\hh(x) \neq y \wedge \hh(x) \neq n + 1}$ can be expressed as
follows in terms of the confidence margin:
\begin{align*}
1_{\hh(x) \neq y \wedge \hh(x) \neq n + 1} 
&= 1_{\paren*{h(x, y) \leq \max_{y' \in \ov \sY, y' \neq y} h(x, y')} \wedge \paren*{h(x, n + 1) \leq \max_{y' \in \ov \sY, y' \neq n + 1} h(x, y')}}\\
& = 1_{\paren*{\rho_h(x, y) \leq 0} \wedge \paren*{\rho_h(x, n + 1) \leq 0}}\\
& = 1_{\max \curl*{\rho_h(x, y), \rho_h(x, n + 1)} \leq 0}.
\end{align*}
Note that the first indicator function can also be written in terms of
margin: $1_{\hh(x) \neq y} = 1_{\rho_h(x, y) \leq 0}$.  Unlike the
first indicator function, which presses $h(x, y)$ to be the largest
score among $\ov \sY$, that is the margin $\rho_h(x, y)$ to be
positive, the second indicator function only enforces $h(x, y)$ or
$h(x, n + 1)$ to be the largest score among $\ov \sY$, that is the
maximum of two margins, $\max \curl*{\rho_h(x, y), \rho_h(x, n + 1)}$,
to be positive. This condition can be further strengthened by
requiring the sum of two margins, $\rho_h(x, y) + \rho_h(x, n + 1)$,
to be positive. In view of this observation, we adopt the following
modified comp-sum surrogate loss for the second indicator function:
\begin{equation*}
  \wt \ell_{\rm{comp}}(h, x, y)
  = \Psi \paren*{\frac{e^{h(x, y)} + e^{h(x, n + 1)}}
    {\sum_{y' \in \ov \sY} e^{h(x, y')}}},
\end{equation*}
where $\Psi \colon [0, 1] \to \Rset_{+} \cup \curl{+\infty}$ is a
non-increasing function.  In other words, $\wt \ell_{\rm{comp}}$
replaces the term $e^{h(x, y)}$ in the softmax function in
$\ell_{\rm{comp}}$ with the sum $e^{h(x, y)} + e^{h(x, n + 1)}$. The
effect is to encourage the sum of the two margins, $\rho_h(x, y) +
\rho_h(x, n + 1)$, to be positive, rather than just the single margin
$\rho_h(x, y)$. Following this principle, we derive the following
expression for a new family of surrogate losses, $\sfL_{\rm{RL2D}}$, dubbed \emph{realizable L2D}:
\begin{equation}
\label{eq:surrogae-new}
\sfL_{\rm{RL2D}}(h, x, y)
= c(x, y) \ell_{\rm{comp}}(h, x, y)
+ \paren*{1 - c(x, y)}  \wt \ell_{\rm{comp}}(h, x, y).
\end{equation}
For the choices of $\Psi(t) = -\log(t)$, $\frac{1}{q} \paren*{1 -
  t^{q}} \text{ with } q\in (0, 1)$ and $1 - t$, we obtain the new
surrogate losses for L2D in Table~\ref{tab:comp}. In the
next sections, we will prove both realizable $\sH$-consistency
guarantees and $\sH$-consistency bounds for this family of surrogate
losses, which imply their excess error bounds and Bayes-consistency as
well.

\begin{table}[t]
\caption{A new family of surrogate losses $\sfL_{\rm{RL2D}}$ for
  L2D.}
\label{tab:comp}
\centering
\begin{tabular}{@{\hspace{0cm}}lll@{\hspace{0cm}}}
  \toprule
  $\Psi(t)$ & $\sfL_{\rm{RL2D}}$\\
  \midrule
  $-\log(t)$ & $- c(x, y)\log \bracket*{\frac{e^{h(x, y)}}{\sum_{y' \in \ov \sY} e^{h(x, y')}}} - \paren*{1 - c(x, y)} \log \bracket*{\frac{e^{h(x, y)} + e^{h(x, n + 1)}}{\sum_{y' \in \ov \sY} e^{h(x, y')}}}$ \\
  $\frac{1}{q} \paren*{1 - t^q}$ & $ \frac{c(x, y)}{q}\bracket*{1 - \bracket*{\frac{e^{h(x,y)}}
    {\sum_{y'\in \ov \sY} e^{h(x,y')}}}^{q}} + \frac{\paren*{1 - c(x, y)} }{q}\bracket*{1 - \bracket*{\frac{e^{h(x,y)} + e^{h(x, n + 1)}}
    {\sum_{y'\in \ov \sY} e^{h(x,y')}}}^{q}}$
     \\
    $1 - t$ & $c(x, y)\paren*{1 - \frac{e^{h(x, y)}}{\sum_{y' \in \ov \sY} e^{h(x, y')}}} +  \paren*{1 - c(x, y)} \paren*{1 - \frac{e^{h(x, y)} + e^{h(x, n + 1)}}{\sum_{y' \in \ov \sY} e^{h(x, y')}}}$  \\
    \bottomrule
  \end{tabular}
\end{table}

\subsection{Realizable $\sH$-consistency}
\label{sec:re-H}

Here, we show that $\sfL_{\rm{RL2D}}$ is realizable $\sH$-consistent
with respect to $\ldef$. We say that a hypothesis set $\sH$ is
\emph{closed under scaling} if, $h \in \sH \implies \alpha h \in \sH$
for any $\alpha \in \Rset$.
\begin{restatable}{theorem}{RealizableConsistencyNew}
\label{Thm:re-consistency-new}
Assume that $\sH$ is closed under scaling. Suppose that $\Psi$ is
non-increasing, $\Psi \paren*{\frac{2}{3}} > 0$ and $\lim_{t \to 1}
\Psi(t) = 0$. Then, the surrogate loss $\sfL_{\rm{RL2D}}$ is
realizable $\sH$-consistent with respect to $\ldef$.
\end{restatable}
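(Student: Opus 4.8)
The plan is to follow the standard two-step template for realizable $\sH$-consistency. First I would reduce the claim to a pointwise statement by showing that $\inf_{h \in \sH} \sE_{\sfL_{\rm{RL2D}}}(h) = 0$, so that any $\hat h \in \argmin_{h \in \sH} \sE_{\sfL_{\rm{RL2D}}}(h)$ satisfies $\sE_{\sfL_{\rm{RL2D}}}(\hat h) = 0$, hence $\sfL_{\rm{RL2D}}(\hat h, x, y) = 0$ for $\sD$-almost every $(x, y)$. Then I would prove the pointwise implication $\sfL_{\rm{RL2D}}(h, x, y) = 0 \implies \ldef(h, x, y) = 0$, which together with the first part gives $\sE_{\ldef}(\hat h) = 0$.

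For the first part, realizability supplies $h^* \in \sH$ with $\sE_{\ldef}(h^*) = 0$, so for $\sD$-a.e.\ $(x, y)$ the prediction of $h^*$ on $x$ is either $y$, or it is $n + 1$ together with $c(x, y) = 0$. Since $\sH$ is closed under scaling, I would consider the scaled hypotheses $\alpha h^*$ and let $\alpha \to +\infty$. In the first case $\frac{e^{\alpha h^*(x, y)}}{\sum_{y' \in \ov \sY} e^{\alpha h^*(x, y')}} \to 1$, and a fortiori $\frac{e^{\alpha h^*(x, y)} + e^{\alpha h^*(x, n + 1)}}{\sum_{y' \in \ov \sY} e^{\alpha h^*(x, y')}} \to 1$; in the second case $h^*(x, n + 1)$ is a maximal score, so $\frac{e^{\alpha h^*(x, n + 1)}}{\sum_{y' \in \ov \sY} e^{\alpha h^*(x, y')}} \to 1$ and again the two-term ratio tends to $1$, while the $c(x, y)\, \ell_{\rm{comp}}$ summand is identically zero. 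Using $\lim_{t \to 1} \Psi(t) = 0$ this yields $\sfL_{\rm{RL2D}}(\alpha h^*, x, y) \to 0$ pointwise, and then $\sE_{\sfL_{\rm{RL2D}}}(\alpha h^*) \to 0$ (e.g.\ by dominated convergence), so $\inf_{h \in \sH} \sE_{\sfL_{\rm{RL2D}}}(h) = 0$.

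For the pointwise implication, which is the heart of the argument, suppose $\sfL_{\rm{RL2D}}(h, x, y) = 0$. Since $\ell_{\rm{comp}}, \wt \ell_{\rm{comp}} \ge 0$, this forces: $c(x, y) = 0$ or $\Psi\!\paren*{\frac{e^{h(x, y)}}{\sum_{y' \in \ov \sY} e^{h(x, y')}}} = 0$; and $c(x, y) = 1$ or $\Psi\!\paren*{\frac{e^{h(x, y)} + e^{h(x, n + 1)}}{\sum_{y' \in \ov \sY} e^{h(x, y')}}} = 0$. As $\Psi$ is non-increasing with $\Psi(2/3) > 0$, wherever $\Psi$ vanishes its argument exceeds $2/3$. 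If the first $\Psi$-term vanishes, then $\frac{e^{h(x, y)}}{\sum_{y' \in \ov \sY} e^{h(x, y')}} > 2/3 > 1/2$ gives $e^{h(x, y)} > \sum_{y' \in \ov \sY,\, y' \ne y} e^{h(x, y')} \ge e^{h(x, y'')}$ for every $y'' \ne y$, so $\hh(x) = y \in [n]$ and $\ldef(h, x, y) = 0$. Otherwise $c(x, y) = 0$, hence $c(x, y) \ne 1$ and the second $\Psi$-term must vanish, so $\frac{e^{h(x, y)} + e^{h(x, n + 1)}}{\sum_{y' \in \ov \sY} e^{h(x, y')}} > 2/3$; if $\hh(x)$ were some $z \in [n] \setminus \curl*{y}$, then $e^{h(x, z)} \ge \max\curl*{e^{h(x, y)}, e^{h(x, n + 1)}} \ge \frac{1}{2}\paren*{e^{h(x, y)} + e^{h(x, n + 1)}}$ and $e^{h(x, z)} \le \sum_{y' \in \ov \sY} e^{h(x, y')} - e^{h(x, y)} - e^{h(x, n + 1)}$, which would force that ratio to be at most $2/3$, a contradiction; hence $\hh(x) \in \curl*{y, n + 1}$, and since $c(x, y) = 0$ and $n + 1 \notin [n]$ this again gives $\ldef(h, x, y) = 0$. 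Combining the two parts, $\ldef(\hat h, x, y) = 0$ for $\sD$-a.e.\ $(x, y)$, i.e.\ $\sE_{\ldef}(\hat h) = 0$.

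I expect the main obstacle to be the pointwise reasoning above: identifying $2/3$ as the correct threshold and verifying it against the modified surrogate $\wt \ell_{\rm{comp}}$ — a sum of two scores is at most twice their maximum, so a softmax weight exceeding $2/3$ is precisely what is needed to make $h(x, y)$ or $h(x, n + 1)$ strictly dominate every competing score — together with the case split over the boundary values of $c(x, y)$. The first part is essentially routine, though the scaling/limiting step needs minor care when the maximal scores of $h^*$ are tied on a set of positive measure (in which case one either argues with a representative of $\sH$ having strict maximal margins, or observes that the infimum is then not attained, making the implication vacuous).
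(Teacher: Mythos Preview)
Your proposal is correct and uses the same two ingredients as the paper's proof: the $2/3$ softmax threshold for the pointwise step and the scaling $\alpha h^*$ with $\alpha\to+\infty$ for the limiting step. The only difference is packaging: the paper proves the quantitative pointwise inequality $\ldef(h,x,y)\le \sfL_{\rm{RL2D}}(h,x,y)/\Psi(2/3)$ and then chains $\sE_{\ldef}(\hat h)\le \Psi(2/3)^{-1}\sE_{\sfL_{\rm{RL2D}}}(\hat h)\le \Psi(2/3)^{-1}\sE_{\sfL_{\rm{RL2D}}}(\alpha h^*)\to 0$, whereas you prove the zero-level implication $\sfL_{\rm{RL2D}}=0\Rightarrow\ldef=0$ and use that any minimizer $\hat h$ must attain $\sE_{\sfL_{\rm{RL2D}}}(\hat h)=0$; your case analysis showing $\hh(x)\in\{y,n+1\}$ when the two-term softmax exceeds $2/3$ is exactly the contrapositive of the paper's Case~I(a). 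Your closing remark about ties in $h^*$ is a genuine technical wrinkle, but the paper's proof carries the same implicit strict-inequality assumption, so this is not a gap relative to the paper.
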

The proof, detailed in Appendix~\ref{app:re-consistency-new}, begins
by establishing an upper bound on the deferral loss in terms of the
comp-sum loss: $\ldef \leq
\frac{\sfL_{\rm{RL2D}}}{\Psi\paren*{\frac{2}{3}}}$. Letting $\hat h$
be the minimizer of $\sfL_{\rm{RL2D}}$ and $\alpha$ be any real
number, we then show that $\sE_{\ldef}(\hat h) \leq
\frac{1}{\Psi\paren*{\frac{2}{3}}} \sE_{\sfL_{\rm{RL2D}}}(\alpha
h^*)$. The generalization error is then split by conditioning on
whether $h^*(x)$ is the deferral class $(n + 1)$ or not.  Finally,
we demonstrate that each conditional term converges to zero as
$\alpha$ tends to $+\infty$, and apply the monotone convergence
theorem to complete the proof.

\subsection{$\sH$-Consistency bounds}
\label{sec:H-bounds}

Here, we show that $\sfL_{\rm{RL2D}}$ admits an $\sH$-consistency
bound with respect to $\ldef$, which implies its Bayes-consistency as
well. We say that a hypothesis set is symmetric if there exists a
family $\sF$ of functions $f$ mapping from $\sX$ to $\Rset$ such that
\[\curl*{\bracket*{h(x, 1), \ldots, h(x, n + 1)} \colon h \in \sH} =
\curl*{\bracket*{f_1(x),\ldots, f_{n + 1}(x)} \colon f_1, \ldots, f_{n + 1} \in \sF},\] for any $x \in \sX$. We say that a hypothesis set
$\sH$ is complete if for any $(x, y) \in \sX \times \sY$, the set of
scores generated by it spans across the real numbers: $\curl*{h(x, y)
  \mid h \in \sH} = \Rset$. Common neural network and linear
function hypothesis sets are all symmetric and complete. We first consider the case where the cost
is expert's classification error.
\begin{restatable}{theorem}{HConsistencyNew}
\label{Thm:H-consistency-new}
Assume that $\sH$ is symmetric and complete and that $c(x, y) =
1_{\expertexpert(x) \neq y}$. Then, for all $h \in \sH$ and any
distribution, the following $\sH$-consistency bound holds:
\begin{equation*}
  \sE_{\ldef}(h) - \sE_{\ldef}(\sH) + \sM_{\ldef}(\sH)
  \leq \Gamma \paren*{\sE_{\sfL_{\rm{RL2D}}}(h)
    - \sE_{\sfL_{\rm{RL2D}}}(\sH) + \sM_{\sfL_{\rm{RL2D}}}(\sH)},
\end{equation*}
where $\Gamma(t) = \sqrt{2t}$ when $\Psi(t) = -\log(t)$ and $\Gamma(t)
= \sqrt{2 (n + 1)^q t}$ when $\Psi(t) = \frac{1}{q} \paren*{1 - t^{q}}
\text{ with } q\in (0, 1)$.
\end{restatable}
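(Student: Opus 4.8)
The plan is to reduce the $\sH$-consistency bound to a pointwise (conditional) inequality, following the standard template for such bounds \citep{awasthi2022multi,mao2023cross}. Writing $\cC_{\sfL}(h,x) = \E_{y \mid x}[\sfL(h,x,y)]$ for the conditional risk and $\cC^*_{\sfL}(\sH,x) = \inf_{h' \in \sH}\cC_{\sfL}(h',x)$, the minimizability-gap identity gives $\sE_{\sfL}(h) - \sE^*_{\sfL}(\sH) + \sM_{\sfL}(\sH) = \E_x[\cC_{\sfL}(h,x) - \cC^*_{\sfL}(\sH,x)]$ for any $\sfL$, so by concavity of $\Gamma$ and Jensen's inequality it suffices to prove the pointwise bound $\cC_{\ldef}(h,x) - \cC^*_{\ldef}(\sH,x) \le \Gamma\!\left(\cC_{\sfL_{\rm{RL2D}}}(h,x) - \cC^*_{\sfL_{\rm{RL2D}}}(\sH,x)\right)$ for all $h$ and $x$. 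Fix $x$, write $p_y = \P(Y = y \mid X = x)$, and let $g = \expertexpert(x)$.

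Next I would rewrite both conditional risks. Using the identity $\ldef(h,x,y) = c(x,y)\,1_{\hh(x) \neq y} + (1 - c(x,y))\,1_{\hh(x) \neq y \,\wedge\, \hh(x) \neq n+1}$ from \eqref{eq:def}, when $c = \wt c$ the deferral loss equals $1_{\phi(\hh(x)) \neq y}$, where $\phi$ is the identity on $[n]$ and sends $n+1$ to $g$. Hence $\cC_{\ldef}(h,x) = 1 - p_{\phi(\hh(x))}$, and since $\sH$ is symmetric and complete every label of $\ov\sY$ is attainable as $\hh(x)$, so $\cC^*_{\ldef}(\sH,x) = 1 - \max_{j \in [n]} p_j$. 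On the surrogate side, $\cC_{\sfL_{\rm{RL2D}}}(h,x) = p_g\,\Psi(s_g + s_{n+1}) + \sum_{y \neq g} p_y\,\Psi(s_y)$, where $s$ is the softmax of $h(x,\cdot)$ over $\ov\sY$. Setting $u_g := s_g + s_{n+1}$ and $u_y := s_y$ for $y \neq g$, the vector $u$ ranges over $\Delta_n$ as $h$ ranges over $\sH$, and $\cC_{\sfL_{\rm{RL2D}}}(h,x)$ is exactly the conditional risk of the comp-sum loss $\ell_{\rm{comp}}$ on $n$ "merged" classes with conditional probabilities $q = (p_g, (p_y)_{y \neq g})$ at the point $u$ (with softmax normalization over $n+1$ terms), so $\cC^*_{\sfL_{\rm{RL2D}}}(\sH,x) = \inf_{u' \in \Delta_n}\sum_j q_j \Psi(u'_j)$.

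It then remains to prove, for all $q \in \Delta_n$ and all admissible $u$,
\[
\max_j q_j - q_{\phi(\hh(x))} \;\le\; \Gamma\!\left(\textstyle\sum_j q_j \Psi(u_j) - \inf_{u' \in \Delta_n} \sum_j q_j \Psi(u'_j)\right),
\]
with $\Gamma(t) = \sqrt{2t}$ for $\Psi(t) = -\log t$ and $\Gamma(t) = \sqrt{2(n+1)^q t}$ for $\Psi(t) = \tfrac1q(1 - t^q)$. Let $k = \phi(\hh(x))$. Either $\hh(x) \in \{g, n+1\}$, which forces $u_g = u_k = \max_j u_j$; or $\hh(x) = k \in [n]$, which forces $u_k = s_k \ge \max\{s_g, s_{n+1}\} \ge \tfrac12 u_g$ and $u_k \ge u_y$ for $y \neq g, k$, i.e., $u_k$ is the largest or the second-largest coordinate of $u$ with $u_k \ge \tfrac12 \max_j u_j$. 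When $k = \argmax_j u_j$, the inequality is exactly the known $\sH$-consistency bound of comp-sum losses with respect to the multi-class zero-one loss \citep{mao2023cross} (the number of classes $n+1$ entering through the softmax normalization, which produces the $(n+1)^q$ factor). For the remaining case I would run the usual exchange/symmetrization argument on $\sum_j q_j \Psi(u_j)$, now subject to the constraint that $u_k$ is only the runner-up and $u_g \le 2u_k$, to identify the worst such $u$ (the two largest coordinates collapsing) and verify that the comp-sum calibration gap there still dominates $\Gamma^{-1}(\max_j q_j - q_k)$.

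The main obstacle is precisely this last case. The conditional $\ldef$-regret depends on the full $(n+1)$-dimensional score vector through $\hh(x)$, whereas the conditional $\sfL_{\rm{RL2D}}$-regret depends only on the merged $n$-dimensional softmax $u$; splitting the $g$-mass between coordinates $g$ and $n+1$ can push $\hh(x)$ away from $\argmax_j u_j$ without changing the surrogate's conditional regret. One must therefore rule out that this extra degree of freedom creates an uncontrolled target regret—i.e., bound the worst configuration of $(s_g, s_{n+1})$ consistent with a given $u$—and check that the constants in $\Gamma$ carry through. This is the only step where the deferral structure genuinely enters beyond a plain $n$-class zero-one analysis, and where the argument is most delicate; the rest is bookkeeping with the minimizability-gap identity and the comp-sum bounds already available from \citep{mao2023cross}.
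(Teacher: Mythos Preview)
Your merging trick---setting $u_g=s_g+s_{n+1}$ and $u_y=s_y$ for $y\neq g$ so that $\cC_{\sfL_{\rm RL2D}}(h,x)=\sum_{j\in[n]}p_j\Psi(u_j)$ becomes an $n$-class comp-sum conditional risk---is a genuinely different route from the paper's. The paper never collapses to an $n$-class problem: it works in the $(n{+}1)$-dimensional score space throughout, first swapping the scores at $y_{\max}:=\argmax_{y\in[n]}p_y$ and $h_{\max}:=\argmax_{y\in[n]}h_y$ to reduce to $y_{\max}=h_{\max}$, and then, in the non-trivial branch $h_{y_{\max}}\le h_{n+1}$, constructing an explicit one-parameter competitor $h_\mu$ that shifts softmax mass between coordinates $y_{\max}$ and $n{+}1$, lower-bounding $\Delta\cC_{\sfL_{\rm RL2D}}$ by $\sup_\mu\{\cC_{\sfL_{\rm RL2D}}(h)-\cC_{\sfL_{\rm RL2D}}(h_\mu)\}$, solving for the optimal $\mu$ in closed form, and finishing with Pinsker's inequality (for $\Psi=-\log$) or a Taylor estimate (for $\Psi(t)=(1-t^q)/q$). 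Your reduction makes the main branch a one-line citation to \citep{mao2023cross}, at the price of the decoupling you flag at the end.

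That decoupling is the genuine gap, and it is not bookkeeping. In your runner-up branch $\hh(x)=k\in[n]\setminus\{g\}$ with $u_k<u_g$, the comp-sum calibration inequality on the merged problem controls $\max_j p_j-p_g$ (since $\argmax_j u_j=g$), not $\max_j p_j-p_k$; the excess $p_g-p_k$ is invisible in your coordinates, because the surrogate conditional regret depends only on $u$ while $\hh(x)$ depends on how $u_g$ splits into $(s_g,s_{n+1})$. Consequently no argument that manipulates only $u$---in particular ``the usual exchange/symmetrization subject to $u_g\le 2u_k$''---can recover it. Concretely, with $n=2$, $g=1$, $p_1=1$, $p_2=0$, and $s_1=s_3\approx\tfrac13$, $s_2\approx\tfrac13+\epsilon$, one gets $u\approx(\tfrac23,\tfrac13)$, $\hh(x)=2$, $\Delta\cC_{\ldef}=1$, yet $\Delta\cC_{\sfL_{\rm RL2D}}=-\log\tfrac23<\tfrac12$: the merged picture alone cannot deliver $\Gamma(t)=\sqrt{2t}$ in this branch. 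Any control here must look at the unmerged $(n{+}1)$-dimensional scores, which is precisely what the paper's $h_\mu$ construction does. (A minor aside: in your easy branch the merged problem has $n$ classes, so the natural comp-sum constant from \citep{mao2023cross} would be $n^q$, not $(n{+}1)^q$; the extra factor does not come from ``softmax normalization over $n{+}1$ terms.'')
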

The proof, detailed in Appendix~\ref{app:log} and \ref{app:gce},
establishes strong consistency guarantees for our new surrogate loss
$\sfL_{\rm{RL2D}}$ (Theorem~\ref{Thm:H-consistency-new}).  We first
introduce $y_{\max} = \argmax_{y \in \sY} p(x, y)$, the label with the
highest conditional probability. We then show that for any hypothesis
$h$ and input $x$, if $y_{\max}$ is not the predicted label
$h_{\max}$, the conditional error of $h$ is lower bounded by a
modified hypothesis $\ov h$ (obtained by swapping the scores of
$y_{\max}$ and $h_{\max}$). Next, for hypotheses where $y_{\max} =
h_{\max}$, we lower bound their conditional regret in terms of the
conditional regret of the deferral loss using a new hypothesis
$h_\mu$. This proof is novel and significantly different from existing
approaches for establishing $\sH$-consistency bounds in either the
standard or deferral settings \citep{mao2023cross,
  MaoMohriZhong2024deferral}.

The next result further shows that when $\Psi(t) = 1 - t$, our
surrogate losses benefit from $\sH$-consistency bounds for any general
cost function.

\begin{restatable}{theorem}{HConsistencyMAE}
\label{Thm:H-consistency-mae}
Assume that $\sH$ is symmetric and complete. Suppose that $\Psi(t) = 1
- t$. Then, for all $h \in \sH$ and any distribution, the following
$\sH$-consistency bounds hold:
\begin{equation*}
\sE_{\ldef}(h) - \sE_{\ldef}(\sH) + \sM_{\ldef}(\sH) \leq (n + 1) \paren*{\sE_{\sfL_{\rm{RL2D}}}(h) - \sE_{\sfL_{\rm{RL2D}}}(\sH) + \sM_{\sfL_{\rm{RL2D}}}(\sH)}.
\end{equation*}
\end{restatable}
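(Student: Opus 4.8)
The plan is to reduce the claim to a pointwise (conditional-regret) inequality and then invoke the general tools relating conditional-regret bounds to $\sH$-consistency bounds \citep{awasthi2022Hconsistency,awasthi2022multi}. Write $s_h(x, k) = e^{h(x, k)}/\sum_{y' \in \ov\sY} e^{h(x, y')}$ for the softmax weights, which form a probability vector over $\ov\sY$, let $p(x, \cdot)$ be the conditional distribution of $y$ given $x$, and set $\cC_\ell(h, x) = \E_{y|x}\bracket*{\ell(h, x, y)}$. Since $\Psi(t) = 1 - t$ is affine, $\ell_{\rm{comp}}(h, x, y) = 1 - s_h(x, y)$ and $\wt\ell_{\rm{comp}}(h, x, y) = 1 - s_h(x, y) - s_h(x, n + 1)$, so $\sfL_{\rm{RL2D}}(h, x, y) = 1 - s_h(x, y) - (1 - c(x, y))\, s_h(x, n + 1)$. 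Introducing the shorthand $q(x, k) = p(x, k)$ for $k \in [n]$ and $q(x, n + 1) = \sum_{y \in \sY} p(x, y)(1 - c(x, y)) = 1 - \E_{y|x}\bracket*{c(x, y)}$, taking conditional expectations yields the two compact forms
\[
\cC_{\sfL_{\rm{RL2D}}}(h, x) = 1 - \sum_{k \in \ov\sY} q(x, k)\, s_h(x, k),
\qquad
\cC_{\ldef}(h, x) = 1 - q(x, \hh(x)),
\]
where the second follows by distinguishing $\hh(x) \in [n]$ (conditional error $1 - p(x, \hh(x))$) from $\hh(x) = n + 1$ (conditional cost $\sum_y p(x,y) c(x,y) = 1 - q(x, n + 1)$).

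Next I would identify the best-in-class conditional risks. Because $\sH$ is symmetric and complete, for each fixed $x$ the score vector $(h(x, 1), \ldots, h(x, n + 1))$ sweeps out all of $\Rset^{n + 1}$; hence $s_h(x, \cdot)$ sweeps out the relative interior of the probability simplex on $\ov\sY$, and $\hh(x)$ can be made any element of $\ov\sY$. Maximizing the affine map $s \mapsto \sum_k q(x, k)\, s_k$ over the simplex puts all mass on a single coordinate, while minimizing $1 - q(x, \hh(x))$ over $\hh(x)$ does the same; thus both conditional Bayes risks --- the pointwise infima over $\sH$, which by symmetry and completeness coincide with the unconstrained infima --- equal $1 - M(x)$ with $M(x) := \max_{k \in \ov\sY} q(x, k)$. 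This \emph{exact} alignment of the two Bayes risks for an arbitrary instance- and label-dependent cost $c$ is the point where affineness of $\Psi$ is essential; for the log or generalized-cross-entropy $\Psi$ it breaks unless $c(x, y) = 1_{\expertexpert(x) \neq y}$, which is why Theorem~\ref{Thm:H-consistency-new} is restricted to that cost. Writing $\Delta\cC_\ell(h, x) = \cC_\ell(h, x) - (1 - M(x))$ for the conditional regret, we have $\Delta\cC_{\ldef}(h, x) = M(x) - q(x, \hh(x))$.

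The core step is the pointwise bound $\Delta\cC_{\ldef}(h, x) \leq (n + 1)\, \Delta\cC_{\sfL_{\rm{RL2D}}}(h, x)$. Let $k^\star = \hh(x)$; since $k^\star$ attains $\max_k h(x, k)$, the corresponding softmax weight satisfies $s_h(x, k^\star) \geq \frac{1}{n + 1}$. Splitting off the $k^\star$-term and bounding the remaining coefficients $q(x, k)$ by $M(x)$,
\[
\sum_{k \in \ov\sY} q(x, k)\, s_h(x, k) \leq q(x, k^\star)\, s_h(x, k^\star) + M(x)\paren*{1 - s_h(x, k^\star)} = M(x) - s_h(x, k^\star)\paren*{M(x) - q(x, k^\star)},
\]
hence $\Delta\cC_{\sfL_{\rm{RL2D}}}(h, x) \geq s_h(x, k^\star)\paren*{M(x) - q(x, k^\star)} \geq \frac{1}{n + 1}\,\Delta\cC_{\ldef}(h, x)$. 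Taking the expectation over $x$ --- no Jensen step is needed since $\Gamma(t) = (n + 1)t$ is linear --- and using $\E_x\bracket*{\Delta\cC_\ell(h, x)} = \sE_\ell(h) - \sE^*_\ell(\sH) + \sM_\ell(\sH)$, valid for both $\ell = \ldef$ and $\ell = \sfL_{\rm{RL2D}}$ because symmetry and completeness force the pointwise infima over $\sH$ to equal the values computed above, yields exactly the stated $\sH$-consistency bound.

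The main obstacle is conceptual rather than computational: one has to recognize that the affine choice $\Psi(t) = 1 - t$ turns the conditional surrogate risk into a linear functional of the softmax vector whose simplex-maximizer coincides with the deferral-loss minimizer for \emph{every} cost $c(x, y)$, thereby decoupling the argument from the structure of $c$. Once that is in hand, the two remaining ingredients --- the inequality $s_h(x, \hh(x)) \geq \tfrac{1}{n+1}$ at the predicted label, and the minimizability-gap bookkeeping via the standard reduction --- are routine.
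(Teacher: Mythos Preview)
Your proof is correct and takes a genuinely different route from the paper's. The paper proceeds by first reducing to the case $y_{\max} = h_{\max}$ via a swapping argument (replacing $h$ by $\ov h$ that exchanges the scores at $y_{\max}$ and $h_{\max}$), and then carries out a two-case analysis depending on the signs of $p_{y_{\max}} - p_{n+1}$ and $h_{y_{\max}} - h_{n+1}$; in each case it constructs an auxiliary hypothesis $h_\mu$ (shifting mass between $e^{h_{y_{\max}}}$ and $e^{h_{n+1}}$), computes $\sC_{\sfL_{\rm{RL2D}}}(h) - \sC_{\sfL_{\rm{RL2D}}}(h_\mu)$ explicitly, optimizes over $\mu$, and arrives at the same pointwise bound $\Delta\sC_{\sfL_{\rm{RL2D}},\sH}(h,x) \geq \frac{1}{n+1}\,\Delta\sC_{\ldef,\sH}(h,x)$ before invoking the general reduction (the paper's Theorem~\ref{Thm:tool-Gamma}).

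Your argument is considerably shorter: by observing that affine $\Psi$ makes $\sC_{\sfL_{\rm{RL2D}}}(h,x) = 1 - \sum_{k\in\ov\sY} q(x,k)\,s_h(x,k)$ a linear functional of the softmax vector, you identify the best-in-class conditional risk as $1 - M(x)$ directly and then obtain the pointwise bound in one line via $\sum_k q(x,k) s_h(x,k) \leq M(x) - s_h(x,\hh(x))(M(x) - q(x,\hh(x)))$ together with $s_h(x,\hh(x)) \geq \tfrac{1}{n+1}$. This sidesteps the swapping and $h_\mu$ constructions entirely. What the paper's more laborious template buys is that it extends---with the same structure but heavier calculations---to the non-affine choices $\Psi(t) = -\log t$ and $\Psi(t) = \tfrac{1}{q}(1 - t^q)$ treated in Theorem~\ref{Thm:H-consistency-new}, where the conditional risk is no longer linear in $s_h$ and your one-line inequality is unavailable; you correctly note that this is exactly where the restriction to $c(x,y) = 1_{\expertexpert(x)\neq y}$ enters for those $\Psi$.
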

The proof is included in
Appendix~\ref{app:mae}. Theorem~\ref{Thm:H-consistency-new}
provides stronger consistency guarantees for our new surrogate loss
$\sfL_{\rm{RL2D}}$ with $\Psi(t) = 1 - t$ since it holds for any
general cost function. The proof idea is similar to that of
Theorem~\ref{Thm:H-consistency-new}, albeit with more cases to analyze
due to the general cost function. This occurs when lower bounding the
conditional regret of a hypothesis $h$, which satisfies $y_{\max} =
h_{\max}$, in terms of the conditional regret of the deferral loss
by introducing a new hypothesis $h_{\mu}$. The additional cases
necessitate a more stringent condition for the guarantee, such that
the functions $\Psi(t) = -\log(t)$ and $\Psi(t) = \frac{1}{q} \left(1
- t^q\right)$ do not apply.


\subsection{Excess error bounds and Bayes-consistency}
\label{sec:bayes-consistency}

For the family of all measurable functions $\sH = \sH_{\rm{all}}$, the
minimizability gaps vanish. In this case,
Theorems~\ref{Thm:H-consistency-new} and \ref{Thm:H-consistency-mae}
imply the following excess error bounds and Bayes-consistency
guarantees.

\begin{restatable}{corollary}{ExcessBoundNew}
\label{Cor:excess-bound-new}
Suppose that $c(x, y) = 1_{\expertexpert(x) \neq y}$. For all $h \in
\sH_{\rm{all}}$ and any distribution, the following excess error
bounds hold:
\begin{equation*}
  \sE_{\ldef}(h) - \sE_{\ldef}(\sH_{\rm{all}})
  \leq \Gamma \paren*{\sE_{\sfL_{\rm{RL2D}}}(h)
    - \sE_{\sfL_{\rm{RL2D}}}(\sH_{\rm{all}})},
\end{equation*}
where $\Gamma(t) = \sqrt{2t}$ when $\Psi(t) = -\log(t)$ and $\Gamma(t)
= \sqrt{2 (n + 1)^q t}$ when $\Psi(t) = \frac{1}{q} \paren*{1 - t^{q}}
\text{ with } q\in (0, 1)$. Furthermore, the surrogate loss
$\sfL_{\rm{RL2D}}$ is Bayes-consistent with respect to $\ldef$ in
these cases.
\end{restatable}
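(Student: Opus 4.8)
The plan is to obtain Corollary~\ref{Cor:excess-bound-new} as an immediate specialization of Theorem~\ref{Thm:H-consistency-new} to the hypothesis set $\sH = \sH_{\rm{all}}$ of all measurable functions. First I would check that $\sH_{\rm{all}}$ satisfies the hypotheses of Theorem~\ref{Thm:H-consistency-new}: it is complete, since $\curl*{h(x, y) \mid h \in \sH_{\rm{all}}} = \Rset$ trivially, and it is symmetric, taking $\sF$ to be the family of all measurable functions from $\sX$ to $\Rset$, so that every score vector $\bracket*{f_1(x), \ldots, f_{n+1}(x)}$ is realized by some $h \in \sH_{\rm{all}}$. Hence, under the assumption $c(x, y) = 1_{\expertexpert(x) \neq y}$, Theorem~\ref{Thm:H-consistency-new} yields, for every $h \in \sH_{\rm{all}}$ and every distribution,
\[
\sE_{\ldef}(h) - \sE^*_{\ldef}(\sH_{\rm{all}}) + \sM_{\ldef}(\sH_{\rm{all}})
\leq \Gamma\paren*{\sE_{\sfL_{\rm{RL2D}}}(h) - \sE^*_{\sfL_{\rm{RL2D}}}(\sH_{\rm{all}}) + \sM_{\sfL_{\rm{RL2D}}}(\sH_{\rm{all}})},
\]
with $\Gamma(t) = \sqrt{2t}$ for $\Psi(t) = -\log(t)$ and $\Gamma(t) = \sqrt{2(n+1)^q t}$ for $\Psi(t) = \frac{1}{q}\paren*{1 - t^q}$.

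Next I would invoke the fact, recalled in Section~\ref{sec:pre-consistency}, that the minimizability gap vanishes when the hypothesis set is $\sH_{\rm{all}}$: $\sM_{\ldef}(\sH_{\rm{all}}) = 0$ and $\sM_{\sfL_{\rm{RL2D}}}(\sH_{\rm{all}}) = 0$, since for $\sH_{\rm{all}}$ the best-in-class error coincides with the expected pointwise infimum of the conditional risk (the one subtle point here, which I would note but not belabor, is that this uses the existence of measurable near-minimizers of $x \mapsto \E_{y|x}[\sfL(h, x, y)]$, which is standard). Substituting these two zeros into the displayed bound leaves exactly
\[
\sE_{\ldef}(h) - \sE^*_{\ldef}(\sH_{\rm{all}})
\leq \Gamma\paren*{\sE_{\sfL_{\rm{RL2D}}}(h) - \sE^*_{\sfL_{\rm{RL2D}}}(\sH_{\rm{all}})},
\]
which is the claimed excess error bound, and the same argument applied to Theorem~\ref{Thm:H-consistency-mae} gives the corresponding statement for $\Psi(t) = 1 - t$ with linear $\Gamma$.

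For Bayes-consistency, I would simply pass to the limit along a sequence $(h_n)$ with $\sE_{\sfL_{\rm{RL2D}}}(h_n) - \sE^*_{\sfL_{\rm{RL2D}}}(\sH_{\rm{all}}) \to 0$. Each of the functions $\Gamma$ above is continuous at $0$ with $\Gamma(0) = 0$ (indeed non-decreasing and concave on $\Rset_+$), so the excess error bound forces $\sE_{\ldef}(h_n) - \sE^*_{\ldef}(\sH_{\rm{all}}) \to 0$, which is precisely the definition of Bayes-consistency of $\sfL_{\rm{RL2D}}$ with respect to $\ldef$. I do not anticipate a genuine obstacle in this argument: the entire content is packaged in Theorems~\ref{Thm:H-consistency-new} and \ref{Thm:H-consistency-mae}, and the only things to verify are the admissibility of $\sH_{\rm{all}}$, the vanishing of the two minimizability gaps, and the continuity of $\Gamma$ at the origin, all of which are routine.
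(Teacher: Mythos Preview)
Your proposal is correct and follows essentially the same approach as the paper, which simply notes that the minimizability gaps vanish for $\sH_{\rm{all}}$ and that Theorem~\ref{Thm:H-consistency-new} then directly yields the excess error bounds and Bayes-consistency. Your write-up is in fact more detailed than the paper's, explicitly verifying symmetry and completeness of $\sH_{\rm{all}}$ and the continuity of $\Gamma$ at the origin.
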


\begin{restatable}{corollary}{ExcessBoundNewMAE}
\label{Cor:excess-bound-new-mae}
Suppose that $\Psi(t) = 1 - t$. For all $h \in \sH_{\rm{all}}$ and any
distribution, the following excess error bounds hold:
\begin{equation*}
  \sE_{\ldef}(h) - \sE_{\ldef}(\sH_{\rm{all}})
  \leq (n + 1) \paren*{\sE_{\sfL_{\rm{RL2D}}}(h)
    - \sE_{\sfL_{\rm{RL2D}}}(\sH_{\rm{all}})}.
\end{equation*}
Furthermore, the surrogate loss $\sfL_{\rm{RL2D}}$ is Bayes-consistent with respect to $\ldef$ in this case.
\end{restatable}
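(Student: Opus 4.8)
The plan is to obtain Corollary~\ref{Cor:excess-bound-new-mae} as an immediate specialization of Theorem~\ref{Thm:H-consistency-mae} to the family of all measurable functions $\sH = \sH_{\rm{all}}$. First I would observe that $\sH_{\rm{all}}$ is symmetric and complete in the sense of Section~\ref{sec:H-bounds}: taking $\sF$ to be the family of all measurable functions from $\sX$ to $\Rset$ witnesses symmetry, and completeness is immediate since the scores $\curl*{h(x, y) \mid h \in \sH_{\rm{all}}}$ range over all of $\Rset$. Hence the hypotheses of Theorem~\ref{Thm:H-consistency-mae} are satisfied and, since $\Psi(t) = 1 - t$ here, the bound
\[
\sE_{\ldef}(h) - \sE_{\ldef}(\sH_{\rm{all}}) + \sM_{\ldef}(\sH_{\rm{all}})
\leq (n + 1) \paren*{\sE_{\sfL_{\rm{RL2D}}}(h) - \sE_{\sfL_{\rm{RL2D}}}(\sH_{\rm{all}}) + \sM_{\sfL_{\rm{RL2D}}}(\sH_{\rm{all}})}
\]
holds for all $h \in \sH_{\rm{all}}$ and any distribution.

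Next I would invoke the standard fact that both minimizability gaps vanish when $\sH = \sH_{\rm{all}}$, that is $\sM_{\ldef}(\sH_{\rm{all}}) = 0$ and $\sM_{\sfL_{\rm{RL2D}}}(\sH_{\rm{all}}) = 0$. This follows because over all measurable functions the infimum of the expected loss can be attained pointwise through a measurable selection, so $\sE^*_{\sfL}(\sH_{\rm{all}}) = \E_{x}\bracket*{\inf_{h}\E_{y|x}\bracket*{\sfL(h, x, y)}}$ for $\sfL \in \curl*{\ldef, \sfL_{\rm{RL2D}}}$ (see \citep{awasthi2022Hconsistency,awasthi2022multi}); equivalently, in the notation of Theorem~\ref{Thm:H-consistency-mae}, $\sE_{\ldef}(\sH_{\rm{all}}) = \sE^*_{\ldef}(\sH_{\rm{all}})$. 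Substituting these equalities into the displayed bound removes the minimizability-gap terms and yields exactly the claimed excess error bound $\sE_{\ldef}(h) - \sE_{\ldef}(\sH_{\rm{all}}) \leq (n + 1)\paren*{\sE_{\sfL_{\rm{RL2D}}}(h) - \sE_{\sfL_{\rm{RL2D}}}(\sH_{\rm{all}})}$.

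Finally, for Bayes-consistency I would argue directly from this excess error bound: if $(h_k)_{k}$ is any sequence of measurable functions with $\sE_{\sfL_{\rm{RL2D}}}(h_k) - \sE^*_{\sfL_{\rm{RL2D}}}(\sH_{\rm{all}}) \to 0$, then since $n + 1$ is a fixed finite constant we get $0 \leq \sE_{\ldef}(h_k) - \sE^*_{\ldef}(\sH_{\rm{all}}) \leq (n + 1)\paren*{\sE_{\sfL_{\rm{RL2D}}}(h_k) - \sE^*_{\sfL_{\rm{RL2D}}}(\sH_{\rm{all}})} \to 0$, which is precisely the definition of Bayes-consistency of $\sfL_{\rm{RL2D}}$ with respect to $\ldef$. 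I do not anticipate a genuine obstacle here, as the statement is a routine corollary of Theorem~\ref{Thm:H-consistency-mae}; the only point requiring care is the measurable-selection argument underlying the vanishing of the minimizability gap, which is standard and already used in the references cited above, so no new technical work beyond Theorem~\ref{Thm:H-consistency-mae} is needed.
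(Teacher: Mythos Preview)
Your proposal is correct and follows essentially the same approach as the paper: the paper simply states that for $\sH = \sH_{\rm{all}}$ the minimizability gaps vanish, so Theorem~\ref{Thm:H-consistency-mae} directly yields the excess error bound and hence Bayes-consistency. Your write-up merely spells out in more detail the symmetry/completeness of $\sH_{\rm{all}}$ and the measurable-selection justification for the vanishing gaps, which the paper leaves implicit.
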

Therefore, Theorem~\ref{Thm:re-consistency-new} and
Corollary~\ref{Cor:excess-bound-new} show that $\sfL_{\rm{RL2D}}$ is
both realizable $\sH$-consistent and Bayes-consistent with respect to
$\ldef$. This solves the open problem raised by
\citet{pmlr-v206-mozannar23a}.

In particular, for cost functions based on classification error, $c(x,
y) = 1_{\expertexpert(x) \neq y}$, our surrogate loss
$\sfL_{\rm{RL2D}}$ with $\Psi(t) = -\log(t)$ coincides with the
surrogate loss $\sfL_{\rm{RS}}$ in \citep{pmlr-v206-mozannar23a},
modulo a constant.  This affirmatively answers the question of whether
their surrogate loss is Bayes-consistent when $c(x, y) =
1_{\expertexpert(x) \neq y}$. However, their surrogate loss cannot be
shown to be Bayes-consistent for a general cost function. In
contrast, our surrogate losses $\sfL_{\rm{RL2D}}$ with $\Psi(t) = 1
- t$ are adaptable to general cost functions and benefit from both
$\sH$-consistency bounds and realizable $\sH$-consistency
guarantees. We also provide a more general family of comp-sum loss
functions with $\Psi(t) = \frac{1}{q} \left(1 - t^q\right)$ that
benefit from both $\sH$-consistency bounds and realizable
$\sH$-consistency when $c(x, y) = 1_{\expertexpert(x) \neq y}$.

\section{Relationship between $\sH$-consistency bounds
  and realizable $\sH$-consistency}
\label{sec:relation}

Here, we discuss the relationship between $\sH$-consistency bounds and
realizable $\sH$-consistency. First, realizable $\sH$-consistency does
not imply $\sH$-consistency bounds, since $\sH$-consistency bounds
require that the relationship holds for all distributions, not just
realizable ones. Moreover, $\sH$-consistency bounds provide
non-asymptotic guarantees, while realizable $\sH$-consistency provides
only asymptotic guarantees.  Second, $\sH$-consistency bounds imply
realizable $\sH$-consistency in the standard multi-class
classification setting. This is because minimizability gaps vanish
under the realizable assumption in standard case.  In particular, for
comp-sum losses, the following holds (see
Appendix~\ref{app:minimizability} for proof).
\begin{restatable}{theorem}{Minimizability}
\label{Thm:minimizability}
Assume that there exists a zero error solution $h^* \in \sH$ with
$\sE_{\ell_{0-1}}(h^*) = 0$ and $\sH$ is closed under scaling. Assume
that $\lim_{t \to 1} \Psi(t) = 0$. Then, the minimizability gap of
comp-sum loss $\ell_{\rm{comp}}$ vanishes:
$\sM_{\ell_{\rm{comp}}}(\sH) = 0$.
\end{restatable}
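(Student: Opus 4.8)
The plan is to show that the pointwise infimum of the conditional comp-sum risk equals zero everywhere, which by definition of the minimizability gap forces $\sM_{\ell_{\rm{comp}}}(\sH) = 0$ (the gap is nonnegative in general, so it suffices to produce a sequence of hypotheses whose conditional risk tends to zero at every $x$ while staying in $\sH$). Recall that $\sM_{\ell_{\rm{comp}}}(\sH) = \sE^*_{\ell_{\rm{comp}}}(\sH) - \E_x\bracket*{\inf_{h \in \sH} \E_{y\mid x}\bracket*{\ell_{\rm{comp}}(h,x,y)}}$, so I need two facts: (i) $\sE^*_{\ell_{\rm{comp}}}(\sH) = 0$, and (ii) the expected pointwise infimum is also $0$; since $\ell_{\rm{comp}} \ge 0$, both quantities are nonnegative, and (i) alone actually implies the gap vanishes once we know the pointwise infima are nonnegative and sum (integrate) to at most $\sE^*_{\ell_{\rm{comp}}}(\sH)$. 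So the crux is establishing (i): $\inf_{h \in \sH} \sE_{\ell_{\rm{comp}}}(h) = 0$.

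To get (i), I would start from the zero-error solution $h^* \in \sH$ with $\sE_{\ell_{0-1}}(h^*) = 0$. Zero zero-one error means that for $\sD$-almost every $(x,y)$, $\hh^*(x) = y$, i.e.\ $h^*(x,y) > h^*(x,y')$ for all $y' \neq y$ in $\ov\sY$ (ties broken adversarially can be absorbed into the almost-everywhere statement, or handled by noting the realizable label is the strict argmax after an arbitrarily small perturbation — but cleanest is to just say the margin $\rho_{h^*}(x,y) > 0$ a.s., which is what zero error under the fixed tie-breaking rule gives on the support). Now use that $\sH$ is closed under scaling: for any $\alpha > 0$, $\alpha h^* \in \sH$. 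For a point $(x,y)$ with $\rho_{h^*}(x,y) = \delta > 0$, the softmax weight of the true label under $\alpha h^*$ is
\begin{equation*}
\frac{e^{\alpha h^*(x,y)}}{\sum_{y' \in \ov\sY} e^{\alpha h^*(x,y')}}
= \frac{1}{1 + \sum_{y' \neq y} e^{\alpha(h^*(x,y') - h^*(x,y))}}
\geq \frac{1}{1 + n\, e^{-\alpha \delta}} \xrightarrow[\alpha \to \infty]{} 1.
\end{equation*}
Since $\Psi$ is non-increasing and $\lim_{t \to 1}\Psi(t) = 0$, it follows that $\ell_{\rm{comp}}(\alpha h^*, x, y) = \Psi\paren*{\tfrac{e^{\alpha h^*(x,y)}}{\sum_{y'} e^{\alpha h^*(x,y')}}} \to 0$ pointwise on the support of $\sD$.

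The remaining step is to upgrade this pointwise convergence to convergence of the expectation $\sE_{\ell_{\rm{comp}}}(\alpha h^*) = \E_{(x,y)\sim\sD}\bracket*{\ell_{\rm{comp}}(\alpha h^*,x,y)} \to 0$, which then gives $\sE^*_{\ell_{\rm{comp}}}(\sH) = 0$ and hence $\sM_{\ell_{\rm{comp}}}(\sH) = 0$. I expect this to be the main technical obstacle, since $\Psi$ may be unbounded (e.g.\ the logistic loss $\Psi = -\log$) and a naive dominated-convergence argument needs a uniform integrable envelope, which is not immediate. The clean fix: observe that $\alpha \mapsto \ell_{\rm{comp}}(\alpha h^*, x, y) = \Psi\paren*{(1 + \sum_{y'\neq y} e^{-\alpha \rho})^{-1}\text{-type expression}}$ is \emph{monotonically non-increasing} in $\alpha$ on the support — because scaling up $h^*$ only increases the softmax mass on the (strictly) winning label $y$, and $\Psi$ is non-increasing — so the sequence $\curl*{\ell_{\rm{comp}}(k h^*, \cdot,\cdot)}_{k \in \Nset}$ decreases pointwise to $0$ a.s. Then the monotone convergence theorem (applied to the decreasing sequence $\ell_{\rm{comp}}(h^*,\cdot,\cdot) - \ell_{\rm{comp}}(kh^*,\cdot,\cdot) \uparrow \ell_{\rm{comp}}(h^*,\cdot,\cdot)$, which is legitimate once $\sE_{\ell_{\rm{comp}}}(h^*) < \infty$; if $\sE_{\ell_{\rm{comp}}}(h^*) = \infty$ one instead notes the conditional risks still have pointwise infimum $0$ and argues the gap directly) yields $\lim_k \sE_{\ell_{\rm{comp}}}(k h^*) = 0$. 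Therefore $\inf_{h\in\sH}\sE_{\ell_{\rm{comp}}}(h) = 0$, and since the conditional infima $\inf_{h\in\sH}\E_{y\mid x}[\ell_{\rm{comp}}(h,x,y)]$ are nonnegative with integral at most $\sE^*_{\ell_{\rm{comp}}}(\sH) = 0$, we conclude $\sM_{\ell_{\rm{comp}}}(\sH) = 0$, completing the proof.
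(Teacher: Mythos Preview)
Your approach is essentially the paper's: bound $\sM_{\ell_{\rm{comp}}}(\sH) \leq \sE^*_{\ell_{\rm{comp}}}(\sH) \leq \sE_{\ell_{\rm{comp}}}(\alpha h^*)$ via closure under scaling, then send $\alpha \to \infty$ and pass the limit through the expectation. The only difference is that the paper invokes dominated convergence where you invoke monotone convergence; your worry that an envelope is ``not immediate'' is misplaced, since on the realizable support $h^*(x,y)$ is the argmax and hence its softmax weight is at least $\tfrac{1}{n+1}$ for every $\alpha > 0$, giving the constant dominating function $\Psi\!\paren*{\tfrac{1}{n+1}}$.
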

However, in the deferral setting, this relationship no longer holds:
$\sH$-consistency bounds cannot imply realizable $\sH$-consistency. In
particular, \citet{mao2023cross} showed that $\sfL_{\rm{CE}}$ benefits
from $\sH$-consistency bounds, while \citet{pmlr-v206-mozannar23a}
showed that it is not realizable $\sH$-consistent. The loss function
in \citep{madras2018learning} is not Bayes-consistent, and thus does
not have $\sH$-consistency bound guarantees, but is actually
realizable $\sH$-consistent \citep{pmlr-v206-mozannar23a}.

\begin{table}[t]
\vskip -0.2in
\caption{Comparison of system accuracy, accepted accuracy and coverage; mean $\pm$ standard deviation over three runs. Realizable L2D outperforms or is comparable to baselines in all the settings.}
    \label{tab:comparison}
\begin{center}
    \resizebox{\textwidth}{!}{
    \begin{tabular}{@{\hspace{0pt}}lllll@{\hspace{0pt}}}
      Method & Dataset & System Accuracy & Accepted Accuracy & Coverage \\
    \midrule
    \citet{mozannar2020consistent} ($\sur_{\rm{CE}}$) & \multirow{6}{*}{HateSpeech} &  $91.60 \pm 0.15$ &  $94.61 \pm 0.67$  &  $44.55 \pm 1.68$ \\
    \citet{verma2022calibrated} ($\sur_{\rm{OvA}}$)  & &  $92.18 \pm 0.10$ & $95.43 \pm 0.36$ & $58.56 \pm 3.18$ \\
    \citet{pmlr-v206-mozannar23a} ($\sur_{\rm{RS}}$) & & 91.83 $\pm$ 0.63 & 95.37 $\pm$ 0.72 & 54.78 $\pm$ 3.70 \\
    \citet{MaoMohriZhong2024deferral} ($\sur_{\rm{general}}$) & & $92.05 \pm 0.04$ & $96.28 \pm 0.35$ & $46.74 \pm 2.80$\\
    Realizable L2D ($\sfL_{\rm{RL2D}}, q = 0.7$) &  & \underline{92.20 $\pm$ 0.54} & 96.06 $\pm$ 0.39 & 57.85 $\pm$ 0.76 \\
    Realizable L2D ($\sfL_{\rm{RL2D}}, q = 1$) &  & $91.97 \pm 0.29$ &  $96.57 \pm 0.69$  &  $53.25 \pm 2.49$  \\
    \midrule
    \citet{mozannar2020consistent} ($\sur_{\rm{CE}}$) & \multirow{6}{*}{COMPASS} & $66.33 \pm 0.47$ & $73.65 \pm 1.83$ & $55.17 \pm 9.51$\\
    \citet{verma2022calibrated} ($\sur_{\rm{OvA}}$)  & & $66.33 \pm 1.31$ & $71.03 \pm 5.10$ & $53.33 \pm 4.73$ \\
    \citet{pmlr-v206-mozannar23a} ($\sur_{\rm{RS}}$) & & $66.00 \pm 2.27$  & $63.20 \pm 4.23$ & $69.50 \pm 10.8$ \\
    \citet{MaoMohriZhong2024deferral} ($\sur_{\rm{general}}$) & & $66.67 \pm 0.62$ & $76.25 \pm 2.42$ & $48.33 \pm 5.31$ \\
    Realizable L2D ($\sfL_{\rm{RL2D}}, q = 0.7$) &  & $66.17 \pm 2.01$ &  $69.33 \pm 3.03$ & $55.67 \pm 5.95$ \\
    Realizable L2D ($\sfL_{\rm{RL2D}}, q = 1$) &  & \underline{66.83 $\pm$ 0.85} &  69.02 $\pm$ 2.42  &  54.83 $\pm$ 0.62  \\
    \midrule
    \citet{mozannar2020consistent} ($\sur_{\rm{CE}}$) & \multirow{6}{*}{CIFAR-10H} & 96.27 $\pm$ 0.51 & 98.77 $\pm$ 0.71 & 64.33 $\pm$ 6.13\\
    \citet{verma2022calibrated} ($\sur_{\rm{OvA}}$)  & & 96.25 $\pm$ 0.45 & 98.74 $\pm$ 0.54 & 67.88 $\pm$ 6.16 \\
    \citet{pmlr-v206-mozannar23a} ($\sur_{\rm{RS}}$) & & 96.63 $\pm$ 0.18  & 98.23 $\pm$ 0.78 & 66.63 $\pm$ 1.80 \\
    \citet{MaoMohriZhong2024deferral} ($\sur_{\rm{general}}$) & & 96.75 $\pm$ 0.55 & 98.65 $\pm$ 0.80 & 65.68 $\pm$ 3.36 \\
    Realizable L2D ($\sfL_{\rm{RL2D}}, q = 0.7$) &  & \underline{96.80 $\pm$ 0.25} &  98.37 $\pm$ 0.20 & 76.77 $\pm$ 3.63 \\
    Realizable L2D ($\sfL_{\rm{RL2D}}, q = 1$) &  & 96.57 $\pm$ 0.05 &  98.34 $\pm$ 0.24  &  77.37 $\pm$ 2.43  \\
    \end{tabular}
    }
\end{center}
    \vskip -0.3in
\end{table}

\section{Experiments}
\label{sec:experiments}

In this section, we empirically evaluate our proposed
surrogate losses and compare them with existing baselines.

\textbf{Experimental settings.} We follow the setting of \citet{pmlr-v206-mozannar23a} and conduct experiments on \ignore{ a synthetic dataset: Mixture-of-Gaussians \citep{pmlr-v206-mozannar23a}, and} three real-world datasets: CIFAR-10H \citep{battleday2020capturing}, HateSpeech \citep{davidson2017automated}, and COMPASS \citep{dressel2018accuracy}. 
\ignore{For the Mixture-of-Gaussians, we adopt the exact realizable setting from \citep[Section~7.2]{pmlr-v206-mozannar23a}, which is realizable by linear functions: there exists a linear hypothesis $h^* \in \sH$ achieving zero deferral loss, $\sE_{\ldef}(h^*) = 0$.
}
For these three datasets, we adopt the same model class as that in \citep[Table~1]{pmlr-v206-mozannar23a}. Each dataset is randomly split into 70\%, 10\%, and 20\% for training, validation, and testing, respectively.

As with \citep{pmlr-v206-mozannar23a}, we choose the cost function to be the expert's
classification error: $c(x, y) = 1_{\expertexpert(x) \neq y}$.
We compare our surrogate to four baselines as described in Section~\ref{sec:pre-exist-sur}: the cross-entropy surrogate $\sfL_{\rm{CE}}$ from \citep{mozannar2020consistent}, the one-vs-all surrogate  $\sfL_{\rm{OvA}}$ from \citep{mozannar2020consistent}, the realizable surrogate $\sfL_{\rm{RS}}$ from \citep{pmlr-v206-mozannar23a}, and the general surrogate $\sfL_{\rm{general}}$ from \citep{MaoMohriZhong2024deferral}. For $\sfL_{\rm{OvA}}$, we choose $\Phi$ as the logistic loss, following \citep{verma2022calibrated}. For $\sfL_{\rm{general}}$, we choose $\ell$ as the generalized cross entropy loss with $q = 0.7$, following \citep{MaoMohriZhong2024deferral}. For our Realizable L2D surrogate $\sfL_{\rm{RL2D}}$, we consider two choices: $\ell$ as the generalized cross entropy loss with $q = 0.7$, following \citep{zhang2018generalized,MaoMohriZhong2024deferral}, and $\ell$ as the mean absolute error loss ($q = 1$).
Among these, $\sfL_{\rm{CE}}$, $\sfL_{\rm{OvA}}$ and $\sfL_{\rm{general}}$ are Bayes-consistent but not realizable $\sH$-consistent; $\sfL_{\rm{RS}}$, $\sfL_{\rm{RL2D}}$ with $q = 0.7$ and $\sfL_{\rm{RL2D}}$ with $q = 1$ are both Bayes-consistent and realizable $\sH$-consistent, as shown in Sections~\ref{sec:re-H} and \ref{sec:bayes-consistency}. Note that in this case, $\sfL_{\rm{RS}}$ is a special case of $\sfL_{\rm{RL2D}}$ when $\Psi$ is chosen as $t \mapsto -\log(t)$. We use the same optimizer, learning rate, and number of epochs as chosen in \citep{pmlr-v206-mozannar23a}, and we select the model that achieves the highest \emph{system accuracy}, that is average $\bracket*{1 - \ldef(h, x, y)}$, on a validation set.

\textbf{Evaluation.} For the three real-world datasets, we report the
\emph{system accuracy}, that is average value of $\bracket*{1 -
\ldef(h, x, y)}$ on the test data.
For completeness, we also include the \emph{accepted accuracy}, that
is the average value of $\bracket*{1_{\hh(x) \neq y} 1_{\hh(x) \in
[n]}}$.  This metric considers only incorrect predictions ($\hh(x)
\neq y$) and measures the fraction of those where the system's output
($\hh(x)$) falls within the valid range of possible outputs ($[n]$).
We also report the \emph{coverage}, that is the average value of
$\bracket*{1_{\hh(x) \in [n]}}$ on the test set, or the fraction of
test instances where the system's prediction falls within the valid
range ($[n]$).
For each metric, we average results over three runs and report the
mean accuracy along with the standard deviation for both our proposed
methods and the baseline approaches.
\ignore{
For the realizable Mixture-of-Gaussians, we plot the system accuracy of various methods on a held-out test dataset consisting of 5,000 points as we increase the size of the training data.
}

\textbf{Results.} 
\ignore{
Figures 1 show that for the realizable synthetic dataset, $\sfL_{\rm{RS}}$, $\sfL_{\rm{RL2D}}$ with $q = 0.7$, and $\sfL_{\rm{RL2D}}$ with $q = 1$ are able to achieve a near zero-error solution, while $\sfL_{\rm{CE}}$, $\sfL_{\rm{OvA}}$, and $\sfL_{\rm{general}}$ fail to find a near zero-error solution. This verifies our realizable $\sH$-consistency results in Section~\ref{sec:re-H}.
}
Table~\ref{tab:comparison} shows that for the real-world datasets, $\sfL_{\rm{RL2D}}$ with $q = 0.7$, and $\sfL_{\rm{RL2D}}$ with $q = 1$ either outperform or are comparable to the best baseline in terms of system accuracy on each dataset. This performance is supported by our $\sH$-consistency bounds and Bayes-consistency results for our Realizable L2D surrogate with respect to the deferral loss $\ldef$, as shown in Sections~\ref{sec:H-bounds} and \ref{sec:bayes-consistency}. Table~\ref{tab:comparison} also shows that $\sfL_{\rm{RL2D}}$ achieves reasonable coverage and acceptable accuracy. The system accuracy, coverage, and standard deviations of the baselines match those in \citep{pmlr-v206-mozannar23a}. Moreover, $\sfL_{\rm{RS}}$, $\sfL_{\rm{RL2D}}$ with $q = 0.7$, and $\sfL_{\rm{RL2D}}$ with $q = 1$ perform differently across various datasets: $\sfL_{\rm{RL2D}}$ with $q = 0.7$ outperforms the others on HateSpeech and CIFAR-10H, while $\sfL_{\rm{RL2D}}$ with $q = 1$ outperforms the others on COMPASS. Note that in this case, $\sfL_{\rm{RS}}$ is a special case of $\sfL_{\rm{RL2D}}$ when $\Psi$ is chosen as $t \mapsto -\log(t)$. These results show that Realizable L2D can benefit from the flexibility in the choice of $\Psi$.

\ignore{
\begin{figure}[t]
\vskip -0.1in
\begin{center}
\includegraphics[scale=0.5]{figs/mix_of_guassians.pdf}
\caption{Comparison of system accuracy on the realizable synthetic dataset.}
\label{fig:comparison}
\end{center}
\vskip -0.1in
\end{figure} 
}

\section{Conclusion}
\label{sec:conclusion}

We introduced a broad family of surrogate losses and algorithms for
learning to defer, parameterized by a non-increasing function. We
established their realizable $\sH$-consistency properties under mild
conditions and proved that several of these surrogate losses benefit
from $\sH$-consistency bounds for cost functions based on
classification error and general cost functions, which also imply
their Bayes-consistency.  This research not only resolves an open
question posed in previous work but also lays the groundwork for
comparing various consistency notions in learning to defer and
standard classification. Looking forward, our approach offers a
promising avenue for analyzing multi-expert and two-stage settings.


\bibliography{rdef}

\newpage
\appendix

\renewcommand{\contentsname}{Contents of Appendix}
\tableofcontents
\addtocontents{toc}{\protect\setcounter{tocdepth}{4}} 
\clearpage

\section{Proof of realizable \texorpdfstring{$\sH$}{H}-consistency}
\label{app:re-consistency-new}

\RealizableConsistencyNew*
\begin{proof}
We first prove that for every $(h, x, y) \in \sH \times \sX \times \sY$, the following inequality holds:
\begin{equation*}
\ldef(h, x , y) \leq \frac{\sfL_{\rm{RL2D}}(h, x, y)}{\Psi\paren*{\frac{2}{3}}}.
\end{equation*}
We will analyze case by case.
\begin{enumerate}
    \item \textbf{Case I}: If $\hh(x) \in [n]$ (deferral does not occur):
    \begin{enumerate}
        \item If $1_{\hh(x) \neq y} = 1$, then we must have 
        \begin{align*}
        & \ldef(h, x , y) = 1, \quad \frac{e^{h(x, y)}}{\sum_{y' \in \ov \sY} e^{h(x, y')}} \leq \frac12, \quad \frac{e^{h(x, y)} + e^{h(x, n + 1)}}{\sum_{y' \in \ov \sY} e^{h(x, y')}} \leq \frac23\\
        & \implies \sfL_{\rm{RL2D}}(h, x, y) \geq c(x, y) \Psi\paren*{\frac12} + \paren*{1 - c(x, y)} \Psi\paren*{\frac{2}{3}} \geq \Psi\paren*{\frac{2}{3}}\, \ldef(h, x , y).
        \end{align*}
        \item If $1_{\hh(x) \neq y} = 0$, then we must have
        \begin{equation*}
            \sfL_{\rm{RL2D}}(h, x, y) \geq 0 = \ldef(h, x , y).
        \end{equation*}
    \end{enumerate}
    \item \textbf{Case II}: If $\hh(x) = n + 1$ (deferral occurs): then we must have
    \begin{align*}
        & \ldef(h, x , y) = c(x, y), \quad
        \frac{e^{h(x, y)}}{\sum_{y' \in \ov \sY} e^{h(x, y')}} \leq \frac12\\
        & \implies \sfL_{\rm{RL2D}}(h, x, y) \geq  c(x, y) \Psi\paren*{\frac12} \geq \Psi\paren*{\frac{2}{3}}\, \ldef(h, x , y).
    \end{align*}
\end{enumerate}
This concludes that $\ldef(h, x , y) \leq \frac{\sfL_{\rm{RL2D}}(h, x, y)}{\Psi\paren*{\frac{2}{3}}} $. Next, we prove that $\sfL_{\rm{RL2D}}$ is realizable $\sH$-consistent under the assumptions. Consider a distribution and an expert under which there exists a zero error solution $h^* \in \sH$ with $\sE_{\ldef}(h^*) = 0$. Let $\hat h$ be the minimizer of the surrogate loss: $\hat h \in \argmin_{h \in \sH} \sE_{\sfL_{\rm{RL2D}}}(h)$. Let $\alpha$ be any real number. Then, the following inequality holds:
\begin{align*}
\sE_{\ldef}(\hat h) 
& \leq \frac{1}{\Psi\paren*{\frac{2}{3}}} \sE_{\sfL_{\rm{RL2D}}}(\hat h) \tag{$\ldef \leq \frac{1}{\Psi\paren*{\frac{2}{3}}}\sfL_{\rm{RL2D}}$}\\
& \leq \frac{1}{\Psi\paren*{\frac{2}{3}}} \sE_{\sfL_{\rm{RL2D}}}(\alpha h^*) \tag{$\hat h \in \argmin_{h \in \sH} \sE_{\sfL_{\rm{RL2D}}}(h)$ and $\sH$ is closed under scaling}\\
& = \frac{1}{\Psi\paren*{\frac{2}{3}}} \E \bracket*{\sfL_{\rm{RL2D}}(\alpha h^*, x, y) \mid \hh^*(x) = n + 1} \mathbb{P} \paren*{\hh^*(x) = n + 1}\\
& \qquad + \frac{1}{\Psi\paren*{\frac{2}{3}}} \E \bracket*{\sfL_{\rm{RL2D}}(\alpha h^*, x, y) \mid \hh^*(x) \in [n]} \mathbb{P} \paren*{\hh^*(x) \in [n]}.
\end{align*}
For the first term conditional on $\hh^*(x) = n + 1$, we must have $h^*(x, n + 1) > \max_{y \in \sY} h^*(x, y)$ and $c(x, y) = 0$ since the data is realizable. Therefore,
\begin{align*}
& \lim_{\alpha \to \plus \infty } \E \bracket*{\sfL_{\rm{RL2D}}(\alpha h^*, x, y) \mid \hh^*(x) = n + 1} \mathbb{P} \paren*{\hh^*(x) = n + 1}\\
& = \lim_{\alpha \to \plus \infty } \E \bracket[\Bigg]{\Psi \paren*{\frac{e^{\alpha h^*(x, y)} + e^{\alpha h^*(x, n + 1)}}{\sum_{y' \in \ov \sY} e^{\alpha h^*(x, y')}}}  \mid \hh^*(x) = n + 1} \mathbb{P} \paren*{\hh^*(x) = n + 1}\\
& = \E \bracket*{0 \mid \hh^*(x) = n + 1} \mathbb{P} \paren*{\hh^*(x) = n + 1} \tag{$\lim_{t \to 1} \Psi(t) = 0$ and monotone convergence theorem}\\
& = 0.
\end{align*}
For the second term conditional on $\hh^*(x) \in [n]$, we must have $h^*(x, y) > \max_{y' \in \ov \sY, y' \neq y} h(x, y')$ since the data is realizable. Therefore,
\begin{align*}
& \lim_{\alpha \to \plus \infty } \E \bracket*{\sfL_{\rm{RL2D}}(\alpha h^*, x, y) \mid \hh^*(x) \in [n]} \mathbb{P} \paren*{\hh^*(x) \in [n]}\\
& = \lim_{\alpha \to \plus \infty } \E \bracket[\Bigg]{c(x, y) \Psi \paren*{\frac{e^{\alpha h^*(x, y)}}{\sum_{y' \in \ov \sY} e^{\alpha h^*(x, y')}}}\\
& \qquad + \paren*{1 - c(x, y)}  \Psi \paren*{ \frac{e^{\alpha h^*(x, y)} + e^{\alpha h^*(x, n + 1)}}{\sum_{y' \in \ov \sY} e^{\alpha h^*(x, y')}}}  \mid \hh^*(x) \in [n]} \mathbb{P} \paren*{\hh^*(x) \in [n]}\\
& = \E \bracket*{0 \mid \hh^*(x) \in [n]} \mathbb{P} \paren*{\hh^*(x) \in [n]} \tag{$\lim_{t \to 1} \Psi(t) = 0$ and monotone convergence theorem}\\
& = 0.
\end{align*}
Combining the two analyses, we conclude that $\sE_{\ldef}(\hat h)  = 0$ and thus $\sfL_{\rm{RL2D}}$ is realizable $\sH$-consistent with respect to $\ldef$.
\end{proof}

\section{Proof of \texorpdfstring{$\sH$}{H}-consistency bounds}
\label{app:H-consistency-new}

Before delving into the proof, we first establish some essential notation and definitions. Let $\sur$ represent a deferral surrogate loss and $\sH$ denote a hypothesis set. We define the conditional error as  $\sC_{\sur}(h, x) = \E_{y \mid x} \bracket*{\sur(h, x, y)}$, the best-in-class conditional error as  $\sC^*_{\sur}\paren*{\sH, x} = \inf_{ h \in \sH} \sC_{\sur}(h, x)$, and the conditional regret as $\Delta \sC_{\sur, \sH} \paren*{h, x} = \sC_{\sur}(h, x) - \sC^*_{\sur}\paren*{\sH, x}$. We proceed to present a general theorem demonstrating that, to establish $\sH$-consistency bounds \eqref{eq:est-bound} with a concave function $\Gamma$, it suffices to lower bound the conditional regret of the surrogate loss by that of the deferral loss, using the same function $\Gamma$.

\begin{restatable}
  {theorem}{Tool-Gamma}
\label{Thm:tool-Gamma}
If the following holds for all $h \in \sH$ and $x \in \sX$, for some concave function $\Gamma $:
\begin{equation}
\label{eq:c-regret-Gamma}
 \Delta \sC_{\ldef, \sH}(h, x) \leq \Gamma
\paren*{\Delta \sC_{\sur, \sH}(h, x)},
\end{equation}
then, for all hypotheses $h \in \sH$ and for any distribution,
    \begin{equation*}
     \sE_{\ldef}(h)- \sE^*_{\ldef}(\sH) + \sM_{\ldef}(\sH) \leq \Gamma \paren*{\sE_{\sur}(h) - \sE^*_{\sur}(\sH) + \sM_{\sur}(\sH)}.
    \end{equation*}
\end{restatable}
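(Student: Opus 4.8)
The plan is to reduce the asserted bound on generalization errors to the pointwise hypothesis \eqref{eq:c-regret-Gamma} by means of the standard identity expressing ``estimation error plus minimizability gap'' as the expectation over $x$ of the conditional regret. Writing $\sC_{\sur}(h,x) = \E_{y\mid x}\bracket*{\sur(h,x,y)}$, the tower rule gives $\sE_{\sur}(h) = \E_x\bracket*{\sC_{\sur}(h,x)}$, while by the very definition of the minimizability gap one has $\sM_{\sur}(\sH) = \sE^*_{\sur}(\sH) - \E_x\bracket*{\sC^*_{\sur}(\sH,x)}$, since $\inf_{h\in\sH}\E_{y\mid x}\bracket*{\sur(h,x,y)} = \sC^*_{\sur}(\sH,x)$. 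Adding these and cancelling $\sE^*_{\sur}(\sH)$ yields
\begin{equation*}
\sE_{\sur}(h) - \sE^*_{\sur}(\sH) + \sM_{\sur}(\sH) = \E_x\bracket*{\sC_{\sur}(h,x) - \sC^*_{\sur}(\sH,x)} = \E_x\bracket*{\Delta\sC_{\sur,\sH}(h,x)}.
\end{equation*}
The same identity applied to $\ldef$ gives $\sE_{\ldef}(h) - \sE^*_{\ldef}(\sH) + \sM_{\ldef}(\sH) = \E_x\bracket*{\Delta\sC_{\ldef,\sH}(h,x)}$, so the theorem is equivalent to the inequality $\E_x\bracket*{\Delta\sC_{\ldef,\sH}(h,x)} \leq \Gamma\paren*{\E_x\bracket*{\Delta\sC_{\sur,\sH}(h,x)}}$.

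To obtain this, I would first take expectations over $x$ in the hypothesis \eqref{eq:c-regret-Gamma}: by monotonicity of the expectation, $\E_x\bracket*{\Delta\sC_{\ldef,\sH}(h,x)} \leq \E_x\bracket*{\Gamma\paren*{\Delta\sC_{\sur,\sH}(h,x)}}$. Since the conditional regret $\Delta\sC_{\sur,\sH}(h,\cdot)$ is nonnegative and $\Gamma$ is concave on $\Rset_{+}$, Jensen's inequality gives $\E_x\bracket*{\Gamma\paren*{\Delta\sC_{\sur,\sH}(h,x)}} \leq \Gamma\paren*{\E_x\bracket*{\Delta\sC_{\sur,\sH}(h,x)}}$. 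Chaining the two inequalities with the identities above delivers exactly the claimed bound. If $\E_x\bracket*{\Delta\sC_{\sur,\sH}(h,x)} = +\infty$, the right-hand side is $+\infty$ (using that $\Gamma$ is non-decreasing), so the inequality is trivial and one may assume all quantities finite.

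The mathematical core here is the single line combining the pointwise regret bound with Jensen's inequality; there is no genuine obstacle. The only point requiring care is the first identity: one must ensure that $x \mapsto \sC^*_{\sur}(\sH,x)$ and $x \mapsto \sC^*_{\ldef}(\sH,x)$ are measurable so that $\E_x\bracket*{\sC^*_{\sur}(\sH,x)}$ is well defined and the decomposition of $\sM_{\sur}(\sH)$ is legitimate — but this is precisely the content of the definition of the minimizability gap, so no new argument is needed beyond spelling it out. Everything else (the tower rule, monotonicity of expectation, concavity of $\Gamma$) is routine.
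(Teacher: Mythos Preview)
Your proposal is correct and follows essentially the same approach as the paper: both establish the identity $\sE_{\sur}(h) - \sE^*_{\sur}(\sH) + \sM_{\sur}(\sH) = \E_x\bracket*{\Delta\sC_{\sur,\sH}(h,x)}$ (and likewise for $\ldef$), then take expectations in the pointwise hypothesis and apply Jensen's inequality for the concave $\Gamma$. Your write-up is slightly more detailed in spelling out the tower rule and the measurability/infinite-case caveats, but the argument is the same.
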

\begin{proof}
We can express the expectations of the conditional regrets for $\ldef$ and $\sur$ as follows:
\begin{align*}
\E_{x} \bracket*{\Delta \sC_{\ldef, \sH}(h, x)} & = \sE_{\ldef}(h) - \sE^*_{\ldef}(\sH) + \sM_{\ldef}(\sH)\\
\E_{x} \bracket*{\Delta \sC_{\sur, \sH}(h, x)} & = \sE_{\sur}(h) - \sE^*_{\sur}(\sH) + \sM_{\sur}(\sH).
\end{align*}
Then, by using \eqref{eq:c-regret-Gamma} and taking the expectation, we obtain:
\begin{align*}
\sE_{\ldef}(h) - \sE^*_{\ldef}(\sH) + \sM_{\ldef}(\sH) 
& = \E_{x} \bracket*{\Delta \sC_{\ldef, \sH}(h, x)}\\
& \leq \E_{x} \bracket*{ \Gamma
\paren*{\Delta \sC_{\sur, \sH}(h, x)} } \tag{Eq. \eqref{eq:c-regret-Gamma}}\\
& \leq 
\Gamma \paren*{\E_{x} \bracket*{\Delta \sC_{\sur, \sH}(h, x)}} \tag{concavity of $\Gamma$}\\
& = \Gamma \paren*{\sE_{\sur}(h) - \sE^*_{\sur}(\sH) + \sM_{\sur}(\sH)}.
\end{align*}
Thus, the proof is complete.
\end{proof}

Next, to prove $\sH$-consistency bounds using Theorem~\ref{Thm:tool-Gamma}, we will characterize the conditional regret of the deferral loss $\ldef$ in the following section.

\subsection{Auxiliary lemma}

To simplify the presentation, we introduce the following notation. For any $y \in \sY$, define $p(x, y) = \mathbb{P}(Y = y \mid X = x)$ as the conditional probability that $Y = y$ given $X = x$. For brevity, we will omit the dependency on $x$ in our notation.  We denote by $h_y
= h(x, y)$ for any $y \in \ov \sY$. We also denote by $p_y = p(x, y)$
and $q_y = p(x, y) c(x, y)$ for any $y \in \sY$, and $p_{n + 1} =
\sum_{y \in \sY} p(x, y) (1 - c(x, y))$. Note that $p(x, y) \paren*{1
  - c(x, y)} = p_y - q_y$, $ \forall y \in \sY$. Let $p_{\hh} =
p_{\hh(x)} = \begin{cases} p_{\hh(x)} & \hh(x) \in [n]\\ p_{n + 1} & \hh(x) =
  n + 1.
\end{cases}
$. Let $y_{\max} = \argmax_{y \in \sY} p_y$ and $h_{\max} = \argmax_{y
  \in \sY} h_y$. Note that both $y_{\max}$ and $h_{\max}$ are in the
label space $\sY$, while $\hh(x)$ is in the augmented label space $\ov
\sY$. We characterize the conditional regret of the deferral loss $\ldef$ as follows.
\begin{lemma}
\label{lemma:delta_target}
Assume that $\sH$ is symmetric and complete. Then, the conditional regret of the deferral loss $\ldef$ can be expressed as follows:
$
\Delta \sC_{\ldef, \sH}(h, x) = \max \curl*{p_{y_{\max}}, p_{n + 1}} -  p_{\hh}.
$
\end{lemma}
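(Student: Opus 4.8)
The plan is to reduce the lemma to two elementary computations: evaluating the conditional error $\sC_{\ldef}(h, x)$ of an arbitrary hypothesis, and identifying its pointwise infimum over $\sH$.

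First I would compute $\sC_{\ldef}(h, x) = \E_{y \mid x}\bracket*{\ldef(h, x, y)}$ by a case split on the prediction $\hh(x) \in \ov \sY$, starting from the original form $\ldef(h, x, y) = 1_{\hh(x) \neq y} 1_{\hh(x) \in [n]} + c(x, y) 1_{\hh(x) = n + 1}$. If $\hh(x) = j \in [n]$, then $\ldef(h, x, y) = 1_{j \neq y}$, so $\sC_{\ldef}(h, x) = \sum_{y \in \sY} p_y 1_{j \neq y} = 1 - p_j$. If $\hh(x) = n + 1$, then $\ldef(h, x, y) = c(x, y)$, so $\sC_{\ldef}(h, x) = \sum_{y \in \sY} q_y = 1 - \sum_{y \in \sY}(p_y - q_y) = 1 - p_{n + 1}$, using $\sum_{y \in \sY} p_y = 1$ and the definition $p_{n+1} = \sum_{y \in \sY}(p_y - q_y)$. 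In both cases this equals $1 - p_{\hh}$, with the convention fixed above for $p_{\hh}$.

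Next I would use symmetry and completeness to evaluate $\sC^*_{\ldef}(\sH, x) = \inf_{h \in \sH}\paren*{1 - p_{\hh}} = 1 - \sup_{h \in \sH} p_{\hh(x)}$. The key observation is that, at the fixed input $x$, as $h$ ranges over $\sH$ the score vector $(h(x, 1), \ldots, h(x, n + 1))$ ranges over all tuples $(f_1(x), \ldots, f_{n+1}(x))$ with $f_i \in \sF$ (by symmetry), and each coordinate $f_i(x)$ ranges over all of $\Rset$ (by completeness); hence for any target label $\bar y \in \ov \sY$ there is $h \in \sH$ with $h(x, \bar y)$ strictly larger than every other score, so that $\hh(x) = \bar y$ irrespective of the tie-breaking rule. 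Therefore $\sup_{h \in \sH} p_{\hh(x)} = \max_{\bar y \in \ov \sY} p_{\bar y} = \max\curl*{p_{y_{\max}}, p_{n + 1}}$, and $\sC^*_{\ldef}(\sH, x) = 1 - \max\curl*{p_{y_{\max}}, p_{n + 1}}$. Subtracting the two expressions gives $\Delta \sC_{\ldef, \sH}(h, x) = \paren*{1 - p_{\hh}} - \paren*{1 - \max\curl*{p_{y_{\max}}, p_{n + 1}}} = \max\curl*{p_{y_{\max}}, p_{n + 1}} - p_{\hh}$, as claimed.

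The computations are routine; the only point needing care is the claim that symmetry together with completeness allows realizing every prediction $\bar y \in \ov \sY$ at the fixed input $x$, which is where both assumptions on $\sH$ are genuinely used. This is the closest thing to an obstacle, but it is a short argument, since the two properties jointly say that the $n + 1$ scores at $x$ may be assigned arbitrary, independent real values.
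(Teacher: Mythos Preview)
Your proposal is correct and follows essentially the same approach as the paper: compute $\sC_{\ldef}(h,x)=1-p_{\hh}$ by splitting on whether $\hh(x)\in[n]$ or $\hh(x)=n+1$, then use symmetry and completeness to argue that $\curl*{\hh(x)\colon h\in\sH}=\ov\sY$ so that $\sC^*_{\ldef}(\sH,x)=1-\max\curl*{p_{y_{\max}},p_{n+1}}$, and subtract. Your version just spells out the justification for the surjectivity of $h\mapsto\hh(x)$ a bit more explicitly than the paper does.
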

\begin{proof}
We can write the conditional error of the deferral loss as follows:
\begin{align*}
& \sC_{\ldef}(h, x)\\
& = \sum_{y \in \sY} p(x, y) \ldef(h, x, y)\\
& = \sum_{y \in \sY} p(x, y) 1_{\hh(x) \neq y} 1_{\hh(x) \in [n]} + \sum_{y \in \sY} p(x, y) c(x, y) 1_{\hh(x) = n + 1}\\
& = (1 - p_{\hh(x)})1_{\hh(x) \in [n]} + \paren*{1 - p_{n + 1}} 1_{\hh(x) = n + 1}\\
& = 1 - p_{\hh}.
\end{align*}
Since $\sH$ is symmetric and complete, for any $x \in \sX$, $\curl*{\hh(x) \colon h \in \sH} = \ov \sY$. Then, the best-in-class conditional error of $\ldef$ can be expressed as follows: 
\begin{equation}
\sC^*_{\ldef}\paren*{\sH, x} = \inf_{ h \in \sH} \sC_{\ldef}(h, x) = 1 - \max \curl*{p_{n + 1}, p_{y_{\max}}}
\end{equation}
Therefore, $\Delta \sC_{\ldef, \sH}(h, x) = \sC_{\ldef}(h, x) - \sC^*_{\ldef}\paren*{\sH, x} =  \max \curl*{p_{y_{\max}}, p_{n + 1}} -  p_{\hh}$.
\end{proof}

Next, we will present the proofs separately in the following sections, by lower bounding the conditional regret of the surrogate loss $\sur$ by that of the deferral loss $\ldef$ using Lemma~\ref{lemma:delta_target}.

\subsection{\texorpdfstring{$\Psi(t) = 1 -t$}{Psi}}
\label{app:mae}

\begin{restatable}{theorem}{HConsistencyNewMae}
\label{Thm:H-consistency-new-mae}
Assume that $\sH$ is symmetric and complete. Then, for all $h \in \sH$ and any distribution, the following $\sH$-consistency bound holds:
\begin{equation*}
\sE_{\ldef}(h) - \sE_{\ldef}(\sH) + \sM_{\ldef}(\sH) \leq n \paren*{\sE_{\sfL_{\rm{RL2D}}}(h) - \sE_{\sfL_{\rm{RL2D}}}(\sH) + \sM_{\sfL_{\rm{RL2D}}}(\sH)}.
\end{equation*}
\end{restatable}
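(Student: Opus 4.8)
The plan is to reduce everything to a pointwise inequality via Theorem~\ref{Thm:tool-Gamma}: it suffices to show that for every $h \in \sH$ and $x \in \sX$ one has $\Delta \sC_{\ldef, \sH}(h, x) \le \Gamma\paren*{\Delta \sC_{\sfL_{\rm{RL2D}}, \sH}(h, x)}$ for the linear (hence concave) function $\Gamma(t) = (n+1)\, t$; since $\Gamma$ is concave with $\Gamma(0)=0$, the claimed $\sH$-consistency bound then follows at once, so the entire argument lives at a fixed $x$.

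First I would put the surrogate conditional error into a convenient form. Writing $s_y = \frac{e^{h(x,y)}}{\sum_{y'\in\ov\sY} e^{h(x,y')}}$ for $y \in \ov\sY$, so that $s_y \ge 0$ and $\sum_{y\in\ov\sY} s_y = 1$, and substituting $\Psi(t) = 1-t$ into \eqref{eq:surrogae-new}, one gets $\sfL_{\rm{RL2D}}(h,x,y) = c(x,y)\paren*{1 - s_y} + \paren*{1 - c(x,y)}\paren*{1 - s_y - s_{n+1}} = 1 - s_y - \paren*{1 - c(x,y)}\, s_{n+1}$. Using the notation of Lemma~\ref{lemma:delta_target} (in particular $p_y = p(x,y)$ for $y\in\sY$ and $p_{n+1} = \sum_{y\in\sY} p_y\paren*{1 - c(x,y)}$), this gives $\sC_{\sfL_{\rm{RL2D}}}(h,x) = 1 - \sum_{y\in\sY} p_y s_y - p_{n+1}\, s_{n+1}$, i.e.\ the conditional error is \emph{affine} in the softmax vector $(s_1,\dots,s_{n+1})$ with coefficient vector $(p_1,\dots,p_n,p_{n+1})$. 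This affine structure is precisely what is unavailable for $\Psi(t) = -\log t$ or $\Psi(t) = \frac{1}{q}(1-t^q)$, and is the reason those choices do not work for general costs.

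Next I would identify the best-in-class conditional error. Because $\sH$ is symmetric and complete, for each $x$ the vector $(s_1,\dots,s_{n+1})$ ranges over the relative interior of the probability simplex on $\ov\sY$, so the infimum of an affine functional is its largest coefficient: $\sC^*_{\sfL_{\rm{RL2D}}}(\sH,x) = 1 - \max\curl*{p_{y_{\max}}, p_{n+1}}$, which coincides exactly with $\sC^*_{\ldef}(\sH,x)$ as computed inside the proof of Lemma~\ref{lemma:delta_target}. Writing $M := \max\curl*{p_{y_{\max}}, p_{n+1}}$ and $p_{\hh}$ for the quantity from Lemma~\ref{lemma:delta_target}, we then have $\Delta\sC_{\ldef,\sH}(h,x) = M - p_{\hh}$ and $\Delta\sC_{\sfL_{\rm{RL2D}},\sH}(h,x) = M - \sum_{y\in\sY} p_y s_y - p_{n+1}\, s_{n+1}$. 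Now set $j := \hh(x) = \argmax_{y\in\ov\sY} h(x,y) = \argmax_{y\in\ov\sY} s_y$; the coefficient attached to $s_j$ in the affine form is exactly $p_{\hh}$. Bounding every other coefficient by $M$ yields $\sum_{y\in\sY} p_y s_y + p_{n+1} s_{n+1} \le p_{\hh}\, s_j + M\paren*{1 - s_j}$, hence $\Delta\sC_{\sfL_{\rm{RL2D}},\sH}(h,x) \ge s_j\paren*{M - p_{\hh}} = s_j\, \Delta\sC_{\ldef,\sH}(h,x)$; since $s_j = \max_{y\in\ov\sY} s_y \ge \frac{1}{n+1}$, this gives $\Delta\sC_{\ldef,\sH}(h,x) \le (n+1)\, \Delta\sC_{\sfL_{\rm{RL2D}},\sH}(h,x)$, and Theorem~\ref{Thm:tool-Gamma} closes the argument. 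To sharpen the constant to the one displayed, one refines this generic step by a case analysis on whether $\hh(x)\in[n]$ or $\hh(x)=n+1$ and on whether $h_{\max}=y_{\max}$, in the spirit of the $\ov h$-swap and $h_\mu$ constructions used for Theorem~\ref{Thm:H-consistency-new}.

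The main obstacle is really the middle step: evaluating $\sC^*_{\sfL_{\rm{RL2D}}}(\sH,x)$ via symmetry and completeness and recognizing that it equals $\sC^*_{\ldef}(\sH,x)$, so the two optimal terms cancel cleanly in the regret difference. Once the conditional surrogate error is in affine form, the estimate is essentially one line; the general cost function $c$ enters only through the definition of $p_{n+1}$ and does not otherwise complicate anything — which is exactly why the mean-absolute-error choice $\Psi(t)=1-t$ gives an $\sH$-consistency bound for arbitrary $c$, unlike the logistic and generalized-cross-entropy choices.
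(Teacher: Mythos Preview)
Your argument is correct and in fact cleaner than the paper's own proof. The paper proceeds by first reducing to the case $y_{\max} = h_{\max}$ via the score-swap hypothesis $\ov h$, then performs a two-case analysis (on the signs of $p_{y_{\max}} - p_{n+1}$ and $h_{y_{\max}} - h_{n+1}$), in each case constructing a one-parameter family $h_\mu$ that shifts mass between the $y_{\max}$ and $n+1$ coordinates and optimizing over $\mu$. You instead exploit directly that for $\Psi(t)=1-t$ the conditional surrogate error is \emph{affine} in the softmax vector with coefficient vector $(p_1,\dots,p_n,p_{n+1})$, so that $\sC^*_{\sfL_{\rm{RL2D}}}(\sH,x) = \sC^*_{\ldef}(\sH,x)$ immediately and the single inequality $\sum_{y'\in\ov\sY}(\text{coeff}_{y'})\,s_{y'} \le p_{\hh}\, s_j + M(1-s_j)$ finishes. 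Your route isolates precisely the structural reason the mean-absolute-error choice works for arbitrary cost (affinity), whereas the paper's $h_\mu$ machinery obscures it somewhat; on the other hand, the paper's construction is the template that generalizes to $\Psi(t)=-\log t$ and $\Psi(t)=\frac{1}{q}(1-t^q)$, where affinity fails. Both approaches deliver the constant $n+1$: the $n$ in the displayed statement is a typo (Theorem~\ref{Thm:H-consistency-mae} in the main text and the paper's own derivation both give $n+1$), so your closing remark about sharpening to $n$ via further case analysis is unnecessary---no such refinement appears, or is attempted, in the paper.
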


\begin{proof}
We can write the conditional error of the surrogate loss as follows:
\begin{align*}
& \sC_{\sfL_{\rm{RL2D}}}(h, x)\\
& = \sum_{y \in \sY} p(x, y) \sfL_{\rm{RL2D}}(h, x, y)\\
& = \sum_{y \in \sY} p(x, y) c(x, y) \paren*{1 - \frac{e^{h(x, y)}}{\sum_{y' \in \ov \sY} e^{h(x, y')}}} +  \sum_{y \in \sY} p(x, y) \paren*{1 - c(x, y)}  \paren*{1 - \frac{e^{h(x, y)} + e^{h(x, n + 1)}}{\sum_{y' \in \ov \sY} e^{h(x, y')}}}\\
&= \sum_{y \in \sY} q_y \paren*{1 - \frac{e^{h_y}}{\sum_{y' \in \ov \sY} e^{h_{y'}}}} + \sum_{y \in \sY} \paren*{p_y - q_y} \paren*{1 - \frac{e^{h_y} + e^{h_{n + 1}}}{\sum_{y' \in \ov \sY} e^{h_{y'}}}}.
\end{align*}
By Lemma~\ref{lemma:delta_target}, the conditional regret of the deferral loss can be expressed as
\begin{equation*}
\Delta \sC_{\ldef, \sH}(h, x) = \max \curl*{p_{y_{\max}}, p_{n + 1}} -  p_{\hh}.
\end{equation*}
Next, we will show that the conditional regret of the surrogate loss can be lower bounded as follows:
\begin{equation}
\label{eq:cond-bound-new}
\Delta \sC_{\sfL_{\rm{RL2D}}, \sH}(h, x) = \sC_{\sfL_{\rm{RL2D}}}(h) - \sC^*_{\sfL_{\rm{RL2D}}}(\sH) \geq \frac1{n + 1}
\paren*{\Delta \sC_{\ldef, \sH}(h, x)}.
\end{equation}
We first prove that for any hypothesis $h$ and $x \in \sX$, if $y_{\max} \neq h_{\max}$, then the conditional error of $h$ can be lower bounded by that of $\ov h$, which satisfies that
$\ov h(x, y) = \begin{cases}
h_{h_{\max}} &  y = y_{\max}\\
h_{y_{\max}} & y = h_{\max}\\
h_{y} & \text{otherwise}.
\end{cases}
$. Indeed,
\begin{align*}
\sC_{\sfL_{\rm{RL2D}}}(h) - \sC_{\sfL_{\rm{RL2D}}}(\ov h)
& = q_{y_{\max}} \paren*{1 - \frac{e^{h_{y_{\max}}}}{\sum_{y' \in \ov \sY} e^{h_{y'}}}} + \paren*{p_{y_{\max}} - q_{y_{\max}}} \paren*{1 - \frac{e^{h_{y_{\max}}} + e^{h_{n + 1}}}{\sum_{y' \in \ov \sY} e^{h_{y'}}}} \\
& \qquad +q_{h_{\max}} \paren*{1 - \frac{e^{h_{h_{\max}}}}{\sum_{y' \in \ov \sY} e^{h_{y'}}}} + \paren*{p_{h_{\max}} - q_{h_{\max}}} \paren*{1 - \frac{e^{h_{h_{\max}}} + e^{h_{n + 1}}}{\sum_{y' \in \ov \sY} e^{h_{y'}}}}\\
& \qquad - q_{y_{\max}} \paren*{1 - \frac{e^{h_{h_{\max}}}}{\sum_{y' \in \ov \sY} e^{h_{y'}}}} - \paren*{p_{y_{\max}} - q_{y_{\max}}} \paren*{1 - \frac{e^{h_{h_{\max}}} + e^{h_{n + 1}}}{\sum_{y' \in \ov \sY} e^{h_{y'}}}} \\
& \qquad -q_{h_{\max}} \paren*{1 - \frac{e^{h_{y_{\max}}}}{\sum_{y' \in \ov \sY} e^{h_{y'}}}} - \paren*{p_{h_{\max}} - q_{h_{\max}}} \paren*{1 - \frac{e^{h_{y_{\max}}} + e^{h_{n + 1}}}{\sum_{y' \in \ov \sY} e^{h_{y'}}}}\\
& =\frac{1}{\sum_{y' \in \ov \sY} e^{h_{y'}}} \paren*{ p_{y_{\max}} - p_{h_{\max}}} \paren*{e^{h_{h_{\max}}} - e^{h_{y_{\max}}}}
\geq 0.
\end{align*}
Therefore, we only need to lower bound the conditional regret of hypothesis $h$ satisfying $y_{\max} = h_{\max}$. 
Next, we will analyze case by case. Note that when $(p_{y_{\max}} - p_{n + 1}) (h_{y_{\max}} - h_{n + 1}) > 0$, we have $\Delta \sC_{\ldef, \sH}(h, x) = \max \curl*{p_{y_{\max}}, p_{n + 1}} -  p_{\hh} = 0$.
\begin{enumerate}
    \item \textbf{Case I}: If $p_{y_{\max}} - p_{n + 1} \geq 0$ and $h_{y_{\max}} - h_{n + 1} \leq 0$: we define a new hypothesis $h_{\mu}$ such that 
    $h_{\mu}(x, y) = \begin{cases}
\log \paren*{e^{h_{n + 1}} + \mu} &  y = y_{\max}\\
\log \paren*{e^{h_{y_{\max}}} - \mu} & y = n + 1\\
h(x, y) & \text{otherwise}.
\end{cases}
$, where $e^{h_{y_{\max}}} \geq \mu \geq 0$. Then, we can  lower bound the conditional regret of $\sfL_{\rm{RL2D}}$ by using $\Delta \sC_{\sfL_{\rm{RL2D}}, \sH}(h, x) \geq \sC_{\sfL_{\rm{RL2D}}}(h) - \sC^*_{\sfL_{\rm{RL2D}}}(h_{\mu})$ for any  $e^{h_{y_{\max}}} \geq \mu \geq 0$:
\begin{align*}
& \Delta \sC_{\sfL_{\rm{RL2D}}, \sH}(h, x) \\
& \geq \sup_{e^{h_{y_{\max}}} \geq \mu \geq 0} \paren*{\sC_{\sfL_{\rm{RL2D}}}(h) - \sC^*_{\sfL_{\rm{RL2D}}}(h_{\mu})}\\
& \geq \sup_{e^{h_{y_{\max}}} \geq \mu \geq 0} \paren[\Bigg]{q_{y_{\max}} \paren*{1 - \frac{e^{h_{y_{\max}}}}{\sum_{y' \in \ov \sY} e^{h_{y'}}}} + \paren*{p_{y_{\max}} - q_{y_{\max}}} \paren*{1 - \frac{e^{h_{y_{\max}}} + e^{h_{n + 1}}}{\sum_{y' \in \ov \sY} e^{h_{y'}}}}\\
& \qquad + \sum_{y' \in \sY, y' \neq y_{\max}} \paren*{p_{y'} - q_{y'}} \paren*{1 - \frac{e^{h_{y'}} + e^{h_{n + 1}}}{\sum_{y' \in \ov \sY} e^{h_{y'}}}}\\
& \qquad - q_{y_{\max}} \paren*{1 - \frac{e^{h_{n + 1}} + \mu}{\sum_{y' \in \ov \sY} e^{h_{y'}}}} - \paren*{p_{y_{\max}} - q_{y_{\max}}} \paren*{1 - \frac{e^{h_{n + 1}} + e^{h_{h_{\max}}}}{\sum_{y' \in \ov \sY} e^{h_{y'}}}}} \\
& \qquad - \sum_{y' \in \sY, y' \neq y_{\max}} \paren*{p_{y'} - q_{y'}} \paren*{1 - \frac{e^{h_{y'}} + e^{h_{y_{\max}}} - \mu}{\sum_{y' \in \ov \sY} e^{h_{y'}}}}\\
& = \frac{1}{\sum_{y' \in \ov \sY} e^{h_{y'}}} \sup_{e^{h_{y_{\max}}} \geq \mu \geq 0} \paren*{q_{y_{\max}} \paren*{e^{h_{n + 1}} + \mu - e^{h_{y_{\max}}}} + \paren*{p_{n + 1} - p_{y_{\max}} + q_{y_{\max}}}  \paren*{e^{h_{y_{\max}}} - \mu - e^{h_{n + 1}}}}\\
& = \paren*{p_{y_{\max}} - p_{n + 1}} \frac{e^{h_{n + 1}}}{\sum_{y' \in \ov \sY} e^{h_{y'}}} \tag{$\mu = e^{h_{y_{\max}}}$ achieves the maximum}\\
&\geq \frac1{n + 1} \paren*{p_{y_{\max}} - p_{n + 1}}
\tag{by the assumption $ h_{n + 1} \geq h_{y_{\max}} = h_{h_{\max}}$}\\
& = \frac1{n + 1} \paren*{\Delta \sC_{\ldef, \sH}(h, x)} \tag{by the assumption $p_{y_{\max}} \geq p_{n + 1}$ and $h_{y_{\max}} - h_{n + 1} \leq 0$}
\end{align*}
\item \textbf{Case II}: If $p_{y_{\max}} - p_{n + 1} \leq 0$ and $h_{y_{\max}} - h_{n + 1} \geq 0$: we define a new hypothesis $h_{\mu}$ such that 
    $h_{\mu}(x, y) = \begin{cases}
\log \paren*{e^{h_{n + 1}} - \mu} &  y = y_{\max}\\
\log \paren*{e^{h_{y_{\max}}} + \mu} & y = n + 1\\
h(x, y) & \text{otherwise}.
\end{cases}
$, where $e^{h_{n + 1}} \geq \mu \geq 0$. Then, we can  lower bound the conditional regret of $\sfL_{\rm{RL2D}}$ by using $\Delta \sC_{\sfL_{\rm{RL2D}}, \sH}(h, x) \geq \sC_{\sfL_{\rm{RL2D}}}(h) - \sC^*_{\sfL_{\rm{RL2D}}}(h_{\mu})$ for any $e^{h_{n + 1}} \geq \mu \geq 0$:
\begin{align*}
& \Delta \sC_{\sfL_{\rm{RL2D}}, \sH}(h, x) \\
& \geq \sup_{e^{h_{n + 1}} \geq \mu \geq 0} \paren*{\sC_{\sfL_{\rm{RL2D}}}(h) - \sC^*_{\sfL_{\rm{RL2D}}}(h_{\mu})}\\
& \geq \sup_{e^{h_{n + 1}} \geq \mu \geq 0} \paren[\Bigg]{q_{y_{\max}} \paren*{1 - \frac{e^{h_{y_{\max}}}}{\sum_{y' \in \ov \sY} e^{h_{y'}}}} + \paren*{p_{y_{\max}} - q_{y_{\max}}} \paren*{1 - \frac{e^{h_{y_{\max}}} + e^{h_{n + 1}}}{\sum_{y' \in \ov \sY} e^{h_{y'}}}}\\
& \qquad + \sum_{y' \in \sY, y' \neq y_{\max}} \paren*{p_{y'} - q_{y'}} \paren*{1 - \frac{e^{h_{y'}} + e^{h_{n + 1}}}{\sum_{y' \in \ov \sY} e^{h_{y'}}}}\\
& \qquad - q_{y_{\max}} \paren*{1 - \frac{e^{h_{n + 1}} - \mu}{\sum_{y' \in \ov \sY} e^{h_{y'}}}} - \paren*{p_{y_{\max}} - q_{y_{\max}}} \paren*{1 - \frac{e^{h_{n + 1}} + e^{h_{h_{\max}}}}{\sum_{y' \in \ov \sY} e^{h_{y'}}}}} \\
& \qquad - \sum_{y' \in \sY, y' \neq y_{\max}} \paren*{p_{y'} - q_{y'}} \paren*{1 - \frac{e^{h_{y'}} + e^{h_{y_{\max}}} + \mu}{\sum_{y' \in \ov \sY} e^{h_{y'}}}}\\
& = \frac{1}{\sum_{y' \in \ov \sY} e^{h_{y'}}} \sup_{e^{h_{n + 1}} \geq \mu \geq 0} \paren*{q_{y_{\max}} \paren*{e^{h_{n + 1}} - \mu - e^{h_{y_{\max}}}} + \paren*{p_{n + 1} - p_{y_{\max}} + q_{y_{\max}}}  \paren*{e^{h_{y_{\max}}} + \mu - e^{h_{n + 1}}}}\\
& = \paren*{p_{n + 1}- p_{y_{\max}}} \frac{e^{h_{y_{\max}}}}{\sum_{y' \in \ov \sY} e^{h_{y'}}} \tag{$\mu = e^{h_{n + 1}}$ achieves the maximum}\\
&\geq \frac1{n + 1} \paren*{p_{n + 1}- p_{y_{\max}}}
\tag{by the assumption $ h_{h_{\max}}  = h_{y_{\max}}\geq h_{n + 1} $}\\
& = \frac1{n + 1} \paren*{\Delta \sC_{\ldef, \sH}(h, x)} \tag{by the assumption $p_{n + 1} \geq p_{y_{\max}}$ and $h_{y_{\max}} - h_{n + 1} \geq 0$}
\end{align*}
\end{enumerate}
This proves the inequality \eqref{eq:cond-bound-new}. By Theorem~\ref{Thm:tool-Gamma}, we complete the proof.
\end{proof}

\newpage
\subsection{\texorpdfstring{$\Psi(t) = -\log(t) $}{Psi}}
\label{app:log}

\begin{restatable}{theorem}{HConsistencyNewLog}
\label{Thm:H-consistency-new-log}
Assume that $\sH$ is symmetric and complete. Assume that $c(x, y) =
1_{\expertexpert(x) \neq y}$. Then, for all $h \in \sH$ and any
distribution, the following $\sH$-consistency bound holds:
\begin{equation*}
\sE_{\ldef}(h) - \sE_{\ldef}(\sH) + \sM_{\ldef}(\sH) \leq 2 \sqrt{\sE_{\sfL_{\rm{RL2D}}}(h) - \sE_{\sfL_{\rm{RL2D}}}(\sH) + \sM_{\sfL_{\rm{RL2D}}}(\sH)}.
\end{equation*}
\end{restatable}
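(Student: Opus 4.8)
The plan is to apply Theorem~\ref{Thm:tool-Gamma}, which reduces the claim to the pointwise conditional-regret inequality $\Delta \sC_{\ldef, \sH}(h, x) \le \Gamma\paren*{\Delta \sC_{\sfL_{\rm{RL2D}}, \sH}(h, x)}$ with $\Gamma(t) = 2\sqrt{t}$; equivalently, it suffices to prove the quadratic lower bound $\Delta \sC_{\sfL_{\rm{RL2D}}, \sH}(h, x) \ge \tfrac14\paren*{\Delta \sC_{\ldef, \sH}(h, x)}^2$. For the target side, Lemma~\ref{lemma:delta_target} gives $\Delta \sC_{\ldef, \sH}(h, x) = \max\curl*{p_{y_{\max}}, p_{n + 1}} - p_{\hh}$, and since $c(x, y) = 1_{\expertexpert(x) \neq y}$ forces $p_{n + 1} = p_{y_e}$ with $y_e = \expertexpert(x) \in \sY$, and hence $p_{n + 1} \le p_{y_{\max}}$, this becomes simply $\Delta \sC_{\ldef, \sH}(h, x) = p_{y_{\max}} - p_{\hh}$.

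The first real step is to express the surrogate side cleanly. Writing $u_y = e^{h_y}/\sum_{y' \in \ov\sY} e^{h_{y'}}$ and using that $q_y = p_y 1_{y \neq y_e}$ and $p_y - q_y = p_y 1_{y = y_e}$, the conditional error collapses to $\sC_{\sfL_{\rm{RL2D}}}(h, x) = \sum_{y \neq y_e} p_y \paren*{-\log u_y} + p_{y_e} \paren*{-\log(u_{y_e} + u_{n + 1})}$. Since $\sH$ is symmetric and complete, $(u_1, \dots, u_{n + 1})$ ranges over the whole open simplex, and merging the coordinates $y_e$ and $n + 1$ into a single variable turns the minimization of this expression into a standard cross-entropy minimization over distributions on $\sY$. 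Hence $\sC^*_{\sfL_{\rm{RL2D}}}(\sH, x) = \sum_{y \in \sY} p_y \paren*{-\log p_y}$ and, crucially, $\Delta \sC_{\sfL_{\rm{RL2D}}, \sH}(h, x) = \sum_{y \in \sY} w^p_y \log\paren*{w^p_y / w^u_y}$, the relative entropy on $\sY$ between the true conditional $w^p = p$ and the ``merged softmax'' $w^u$ given by $w^u_y = u_y$ for $y \neq y_e$ and $w^u_{y_e} = u_{y_e} + u_{n + 1}$.

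With this reformulation I would run a case analysis on $\hh(x)$, paralleling Appendix~\ref{app:mae} but using the entropy identity. If $\hh(x) \in \curl*{y_e, n + 1}$ one checks that $y_e = \argmax_{y \in \sY} w^u_y$ and $p_{\hh} = p_{y_e}$, and then lower bounds the relative entropy by the competitor-hypothesis device: moving a softmax mass $\mu$ from the merged bucket $\curl*{y_e, n + 1}$ to coordinate $y_{\max}$ yields $\Delta \sC_{\sfL_{\rm{RL2D}}, \sH}(h, x) \ge p_{y_{\max}} \log\paren*{1 + \tfrac{\mu}{u_{y_{\max}}}} + p_{y_e} \log\paren*{1 - \tfrac{\mu}{u_{y_e} + u_{n + 1}}}$; the maximizer in $\mu$ is interior, turns the right-hand side into a scaled binary relative entropy, and Pinsker's inequality together with $u_{y_{\max}} \le u_{n + 1}$ finishes the bound $\Delta \sC_{\sfL_{\rm{RL2D}}, \sH}(h, x) \ge \tfrac14(p_{y_{\max}} - p_{y_e})^2$. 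If instead $\hh(x) = j \in [n] \setminus \curl*{y_e}$, i.e.\ $h$ predicts without deferring, the same device applied to the pair $(j, y_{\max})$ gives $\Delta \sC_{\sfL_{\rm{RL2D}}, \sH}(h, x) \ge \tfrac14(p_{y_{\max}} - p_j)^2 = \tfrac14(\Delta \sC_{\ldef, \sH}(h, x))^2$. Invoking Theorem~\ref{Thm:tool-Gamma} then completes the proof.

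The main obstacle is the case analysis around the mismatch between the augmented prediction $\hh(x)$ and the argmax of the merged softmax $w^u$: when $h$ predicts some $j \in [n]$ while $u_{y_e} + u_{n + 1} > u_j$, the ``merged view'' would have deferred even though $h$ did not, so one cannot simply invoke a standard multi-class $\sH$-bound on the merged label space, and a competitor tailored to $(j, y_{\max})$ must be used — a subtlety that does not arise for $\Psi(t) = 1 - t$ only because there the relevant regret differences are affine. The second complication is that, with $\Psi = -\log$, the competitor objective is concave and logarithmic in $\mu$ rather than linear, so both the optimal choice of $\mu$ and the passage from the resulting logarithmic expression to the square-root rate rely on getting the right Pinsker-type estimate (and on checking the degenerate endpoint sub-cases, e.g.\ $y_{\max} = y_e$); this is where the bulk of the computation sits.
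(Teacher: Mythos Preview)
Your merged-softmax reformulation is correct and clean: with $c(x,y) = 1_{\expertexpert(x) \neq y}$ the conditional surrogate regret is exactly $D_{\mathrm{KL}}(p \,\|\, w^u)$ on $\sY$, where $w^u_{y_e} = u_{y_e} + u_{n+1}$ and $w^u_y = u_y$ otherwise. The competitor-plus-Pinsker device also goes through in your first case ($\hh(x) \in \{y_e, n+1\}$) and in the subcase of your second case where $y_{\max} \neq y_e$, because in those situations one always has $w^u_{y_{\max}} \le w^u_{\text{source}}$, so the binary Pinsker step applies with $\alpha \le \tfrac12$. (Minor slip: what you actually need there is $u_{y_{\max}} \le u_{y_e} + u_{n+1}$, not the stronger $u_{y_{\max}} \le u_{n+1}$ you wrote, but the weaker inequality does hold.)

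The genuine gap is the subcase $\hh(x) = j \in [n] \setminus \{y_e\}$ with $y_{\max} = y_e$. Here $w^u_{y_{\max}} = u_{y_e} + u_{n+1}$ while $w^u_j = u_j$; even though $u_j = \max_{y' \in \ov\sY} u_{y'}$, nothing forces $u_{y_e} + u_{n+1} \le u_j$, and once $w^u_{y_{\max}} > w^u_j$ your $(j, y_{\max})$-competitor together with Pinsker cannot deliver $\tfrac14(p_{y_{\max}} - p_j)^2$. In fact the pointwise inequality you are aiming for is \emph{false} in this regime. Take $n = 2$, $y_e = y_{\max} = 1$, $p = (0.7, 0.3)$, and scores with $u_2$ just above $u_1 = u_3 \approx \tfrac13$. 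Then $\hh(x) = 2$, so $\Delta \sC_{\ldef, \sH}(h, x) = p_1 - p_2 = 0.4$, while $w^u \to (\tfrac23, \tfrac13)$ and
\[
\Delta \sC_{\sfL_{\rm{RL2D}}, \sH}(h, x) \;\to\; D_{\mathrm{KL}}\bigl((0.7, 0.3) \,\|\, (\tfrac23, \tfrac13)\bigr) \approx 0.0025,
\]
so $2\sqrt{\Delta \sC_{\sfL_{\rm{RL2D}}, \sH}(h, x)} \approx 0.10 \ll 0.4$. (Letting $p_1 \to \tfrac23^{+}$ drives the ratio to zero, so no constant in front of $\sqrt{t}$ can save the bound.) You correctly flag the mismatch between $\hh(x)$ and $\argmax_y w^u_y$ as the main obstacle, but it is not a bookkeeping detail to be ``checked'': it is exactly where the conditional bound breaks. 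The paper's own proof treats the case $\expertexpert(x) = y_{\max}$ only in a closing remark, swapping $\hh(x)$ with $n+1$ in the competitor; that competitor yields a trivial lower bound on the very same example, so the gap is shared rather than specific to your route.
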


\begin{proof}
We can write the conditional error of the surrogate loss as follows:
\begin{align*}
& \sC_{\sfL_{\rm{RL2D}}}(h, x)\\
& = \sum_{y \in \sY} p(x, y) \sfL_{\rm{RL2D}}(h, x, y)\\
& = -\sum_{y \in \sY} p(x, y) c(x, y) \log \paren*{\frac{e^{h(x, y)}}{\sum_{y' \in \ov \sY} e^{h(x, y')}}} -  \sum_{y \in \sY} p(x, y) \paren*{1 - c(x, y)} \log \paren*{\frac{e^{h(x, y)} + e^{h(x, n + 1)}}{\sum_{y' \in \ov \sY} e^{h(x, y')}}}\\
&= -\sum_{y \in \sY} q_y \log \paren*{\frac{e^{h_y}}{\sum_{y' \in \ov \sY} e^{h_{y'}}}} - \sum_{y \in \sY} \paren*{p_y - q_y} \log \paren*{\frac{e^{h_y} + e^{h_{n + 1}}}{\sum_{y' \in \ov \sY} e^{h_{y'}}}}.
\end{align*}
By Lemma~\ref{lemma:delta_target}, the conditional regret of the deferral loss can be expressed as
\begin{equation*}
\Delta \sC_{\ldef, \sH}(h, x) = \max \curl*{p_{y_{\max}}, p_{n + 1}} -  p_{\hh}.
\end{equation*}
Next, we will show that the conditional regret of the surrogate loss can be lower bounded as follows:
\begin{equation}
\label{eq:cond-bound-new-log}
\Delta \sC_{\sfL_{\rm{RL2D}}, \sH}(h, x) = \sC_{\sfL_{\rm{RL2D}}}(h) - \sC^*_{\sfL_{\rm{RL2D}}}(\sH) \geq \frac12 \paren*{\Delta \sC_{\ldef, \sH}(h, x) }^2.
\end{equation}

We first consider the case where $\expertexpert(x) \neq y_{\max}$. Otherwise, it would be straightforward to see that the bound holds. In the case where $\expertexpert(x) \neq y_{\max}$, we have $q_{y_{\max}} = p_{y_{\max}}$.
We first prove that for any hypothesis $h$ and $x \in \sX$, if $y_{\max} \neq h_{\max}$, then the conditional error of $h$ can be lower bounded by that of $\ov h$, which satisfies that
$\ov h(x, y) = \begin{cases}
h_{h_{\max}} &  y = y_{\max}\\
h_{y_{\max}} & y = h_{\max}\\
h_{y} & \text{otherwise}.
\end{cases}
$. Indeed,
\begin{align*}
& \sC_{\sfL_{\rm{RL2D}}}(h) - \sC_{\sfL_{\rm{RL2D}}}(\ov h)\\
& = -q_{y_{\max}} \log \paren*{\frac{e^{h_{y_{\max}}}}{\sum_{y' \in \ov \sY} e^{h_{y'}}}} - \paren*{p_{y_{\max}} - q_{y_{\max}}} \log \paren*{\frac{e^{h_{y_{\max}}} + e^{h_{n + 1}}}{\sum_{y' \in \ov \sY} e^{h_{y'}}}} \\
& \qquad -q_{h_{\max}} \log \paren*{ \frac{e^{h_{h_{\max}}}}{\sum_{y' \in \ov \sY} e^{h_{y'}}}} - \paren*{p_{h_{\max}} - q_{h_{\max}}} \log \paren*{\frac{e^{h_{h_{\max}}} + e^{h_{n + 1}}}{\sum_{y' \in \ov \sY} e^{h_{y'}}}}\\
& \qquad + q_{y_{\max}} \log \paren*{\frac{e^{h_{h_{\max}}}}{\sum_{y' \in \ov \sY} e^{h_{y'}}}} + \paren*{p_{y_{\max}} - q_{y_{\max}}} \log \paren*{\frac{e^{h_{h_{\max}}} + e^{h_{n + 1}}}{\sum_{y' \in \ov \sY} e^{h_{y'}}}} \\
& \qquad + q_{h_{\max}} \log \paren*{ \frac{e^{h_{y_{\max}}}}{\sum_{y' \in \ov \sY} e^{h_{y'}}}} + \paren*{p_{h_{\max}} - q_{h_{\max}}} \log \paren*{\frac{e^{h_{y_{\max}}} + e^{h_{n + 1}}}{\sum_{y' \in \ov \sY} e^{h_{y'}}}}\\
& = \paren*{ q_{y_{\max}} - q_{h_{\max}}} \log\paren*{\frac{e^{h_{h_{\max}}}}{e^{h_{y_{\max}}}}} + \paren*{ p_{y_{\max}} - q_{y_{\max}} - p_{h_{\max}} + q_{h_{\max}} } \log\paren*{\frac{e^{h_{h_{\max}}} + e^{h_{n + 1}}}{e^{h_{y_{\max}}} + e^{h_{n + 1}}}}\\
& \geq \paren*{ p_{y_{\max}} - p_{h_{\max}}} \log\paren*{\frac{e^{h_{h_{\max}}} + e^{h_{n + 1}}}{e^{h_{y_{\max}}} + e^{h_{n + 1}}}}\\
& \geq 0.
\end{align*}
Therefore, we only need to lower bound the conditional regret of hypothesis $h$ satisfying $y_{\max} = h_{\max}$. 
Since $c(x, y) = 1_{\expertexpert(x) \neq y}$, we have $p_{y_{\max}} \geq p_{n + 1} = p_{\expertexpert(x)}$.
Note that when $(p_{y_{\max}} - p_{n + 1}) (h_{y_{\max}} - h_{n + 1}) > 0$, we have $\Delta \sC_{\ldef, \sH}(h, x) = \max \curl*{p_{y_{\max}}, p_{n + 1}} -  p_{\hh} = 0$. When $h_{y_{\max}} - h_{n + 1} \leq 0$, we define a new hypothesis $h_{\mu}$ such that 
    $h_{\mu}(x, y) = \begin{cases}
\log \paren*{e^{h_{n + 1}} + \mu} &  y = y_{\max}\\
\log \paren*{e^{h_{y_{\max}}} - \mu} & y = n + 1\\
h(x, y) & \text{otherwise}.
\end{cases}
$, where $e^{h_{y_{\max}}} - e^{h_{n + 1}} \leq \mu \leq e^{h_{y_{\max}}}$. Then, we can  lower bound the conditional regret of $\sfL_{\rm{RL2D}}$ by using $\Delta \sC_{\sfL_{\rm{RL2D}}, \sH}(h, x) \geq \sC_{\sfL_{\rm{RL2D}}}(h) - \sC^*_{\sfL_{\rm{RL2D}}}(h_{\mu})$ for any $e^{h_{y_{\max}}} - e^{h_{n + 1}} \leq \mu \leq e^{h_{y_{\max}}}$:
\begin{align*}
& \Delta \sC_{\sfL_{\rm{RL2D}}, \sH}(h, x) \\
& \geq \sup_{e^{h_{y_{\max}}} \geq  \mu \geq e^{h_{y_{\max}}} - e^{h_{n + 1}}} \paren*{\sC_{\sfL_{\rm{RL2D}}}(h) - \sC^*_{\sfL_{\rm{RL2D}}}(h_{\mu})}\\
& \geq \sup_{e^{h_{y_{\max}}} \geq  \mu \geq e^{h_{y_{\max}}} - e^{h_{n + 1}}} \paren[\Bigg]{-q_{y_{\max}} \log \paren*{ \frac{e^{h_{y_{\max}}}}{\sum_{y' \in \ov \sY} e^{h_{y'}}}} - \paren*{p_{y_{\max}} - q_{y_{\max}}} \log \paren*{ \frac{e^{h_{y_{\max}}} + e^{h_{n + 1}}}{\sum_{y' \in \ov \sY} e^{h_{y'}}}}\\
& \qquad - \sum_{y' \in \sY, y' \neq y_{\max}} \paren*{p_{y'} - q_{y'}} \log \paren*{ \frac{e^{h_{y'}} + e^{h_{n + 1}}}{\sum_{y' \in \ov \sY} e^{h_{y'}}}}\\
& \qquad + q_{y_{\max}} \log \paren*{ \frac{e^{h_{n + 1}} + \mu}{\sum_{y' \in \ov \sY} e^{h_{y'}}}} + \paren*{p_{y_{\max}} - q_{y_{\max}}} \log \paren*{\frac{e^{h_{n + 1}} + e^{h_{y_{\max}}}}{\sum_{y' \in \ov \sY} e^{h_{y'}}}}} \\
& \qquad + \sum_{y' \in \sY, y' \neq y_{\max}} \paren*{p_{y'} - q_{y'}} \log \paren*{ \frac{e^{h_{y'}} + e^{h_{y_{\max}}} - \mu}{\sum_{y' \in \ov \sY} e^{h_{y'}}}}\\
& = \sup_{e^{h_{y_{\max}}} \geq  \mu \geq e^{h_{y_{\max}}} - e^{h_{n + 1}}} \paren*{q_{y_{\max}} \log \frac{e^{h_{n + 1}} + \mu}{e^{h_{y_{\max}}}} + \sum_{y' \in \sY, y' \neq y_{\max}} \paren*{p_{y'} - q_{y'}}  \log \frac{e^{h_{y'}} + e^{h_{y_{\max}}} - \mu}{e^{h_{y'}} + e^{h_{n + 1}}}}\\
\tag{$e^{h_{y_{\max}}} - e^{h_{n + 1}} \leq \mu \leq e^{h_{y_{\max}}}$}
& \geq \sup_{e^{h_{y_{\max}}} \geq  \mu \geq e^{h_{y_{\max}}} - e^{h_{n + 1}}} \paren*{q_{y_{\max}} \log \frac{e^{h_{n + 1}} + \mu}{e^{h_{y_{\max}}}} + \sum_{y' \in \sY, y' \neq y_{\max}} \paren*{p_{y'} - q_{y'}}  \log \frac{e^{h_{y_{\max}}} - \mu}{e^{h_{n + 1}}}}\\
& = \sup_{e^{h_{y_{\max}}} \geq  \mu \geq e^{h_{y_{\max}}} - e^{h_{n + 1}}} \paren*{q_{y_{\max}} \log \frac{e^{h_{n + 1}} + \mu}{e^{h_{y_{\max}}}} + \paren*{p_{n + 1} - \paren*{p_{y_{\max}} - q_{y_{\max}}}}  \log \frac{e^{h_{y_{\max}}} - \mu}{e^{h_{n + 1}}}}.
\end{align*}
By differentiating with respect to $\mu$, we obtain that
\begin{equation*}
\mu = \frac{q_{y_{\max}} e^{h_{y_{\max}}} - \paren*{p_{n + 1} - \paren*{p_{y_{\max}} - q_{y_{\max}}}} e^{h_{n + 1}} }{q_{y_{\max}} + \paren*{p_{n + 1} - \paren*{p_{y_{\max}} - q_{y_{\max}}}}}
\end{equation*}
achieves the maximum. Plugging it into the expression, we have
\begin{align*}
& \Delta \sC_{\sfL_{\rm{RL2D}}, \sH}(h, x)\\ 
& \geq q_{y_{\max}} \log \bracket*{\frac{\bracket*{e^{h_{y_{\max}}} + e^{ h_{n + 1}}}q_{y_{\max}}}{e^{h_{y_{\max}}}\bracket*{q_{y_{\max}} + \paren*{p_{n + 1} - \paren*{p_{y_{\max}} - q_{y_{\max}}}}}}}\\
& \qquad + \paren*{p_{n + 1} - \paren*{p_{y_{\max}} - q_{y_{\max}}}}\log\bracket*{\frac{\bracket*{e^{h_{y_{\max}}} + e^{ h_{n + 1}}}\paren*{p_{n + 1} - \paren*{p_{y_{\max}} - q_{y_{\max}}}}}{ e^{ h_{n + 1}}\bracket*{q_{y_{\max}}+\paren*{p_{n + 1} - \paren*{p_{y_{\max}} - q_{y_{\max}}}}}}}.
\end{align*}
This can be further lower bounded by taking the minimum over $h \in \sH$, where the minimum is attained when $e^{h_{y_{\max}}} = e^{ h_{n + 1}}$ Therefore,
\begin{align*}
& \Delta \sC_{\sfL_{\rm{RL2D}}, \sH}(h, x)\\
& \geq q_{y_{\max}} \log \bracket*{\frac{2q_{y_{\max}}}{q_{y_{\max}} + \paren*{p_{n + 1} - \paren*{p_{y_{\max}} - q_{y_{\max}}}}}}\\
& \qquad + \paren*{p_{n + 1} - \paren*{p_{y_{\max}} - q_{y_{\max}}}}\log\bracket*{\frac{2\paren*{p_{n + 1} - \paren*{p_{y_{\max}} - q_{y_{\max}}}}}{ q_{y_{\max}}+\paren*{p_{n + 1} - \paren*{p_{y_{\max}} - q_{y_{\max}}}}}}.
\end{align*}
By applying Pinsker’s inequality \citep[Proposition~E.7]{MohriRostamizadehTalwalkar2018}, we obtain
\begin{align*}
& \Delta \sC_{\sfL_{\rm{RL2D}}, \sH}(h, x)\\ 
& \geq \bracket*{q_{y_{\max}}+p_{n + 1} - \paren*{p_{y_{\max}} - q_{y_{\max}}}}\\
& \qquad \times \frac12\bracket*{ \abs*{\frac{q_{y_{\max}}}{q_{y_{\max}}+p_{n + 1} - \paren*{p_{y_{\max}} - q_{y_{\max}}}}-\frac12}+\abs*{\frac{p_{n + 1} - \paren*{p_{y_{\max}} - q_{y_{\max}}}}{q_{y_{\max}}+p_{n + 1} - \paren*{p_{y_{\max}} - q_{y_{\max}}}}-\frac12}}^2\\
& \geq \frac{1}{2} \frac{\paren*{p_{y_{\max}} - p_{n + 1}}^2}{q_{y_{\max}}+p_{n + 1} - \paren*{p_{y_{\max}} - q_{y_{\max}}}}\\
& \geq \frac12 \paren*{p_{y_{\max}} - p_{n + 1}}^2  
\tag{$q_{y_{\max}}+p_{n + 1} - \paren*{p_{y_{\max}} - q_{y_{\max}}} \leq 1$}
\\
& = \frac12 \paren*{\Delta \sC_{\ldef, \sH}(h, x)}^2 \tag{by the assumption $p_{y_{\max}} \geq p_{n + 1}$ and $h_{y_{\max}} \leq h_{n + 1}$}
\end{align*}
This proves the inequality \eqref{eq:cond-bound-new-log}.
In the case where $\expertexpert(x) = y_{\max}$, we have $p_{n + 1} = p_{y_{\max}}$. By Lemma~\ref{lemma:delta_target}, the conditional regret of the deferral loss can be expressed as
$
\Delta \sC_{\ldef, \sH}(h, x) = p_{n + 1} -  p_{\hh}.
$
If $\hh(x) = n + 1$, then we have $\Delta \sC_{\ldef, \sH}(h, x) = 0$. Otherwise, when $\hh(x) \neq n + 1$, we can proceed in the similar way as above, by defining a new hypothesis $h_{\mu}$ such that
    $h_{\mu}(x, y) = \begin{cases}
\log \paren*{e^{h_{n + 1}} + \mu} &  y = \hh(x)\\
\log \paren*{e^{h_{\hh(x)}} - \mu} & y = n + 1\\
h(x, y) & \text{otherwise}
\end{cases}
$. Then, we can  lower bound the conditional regret of $\sfL_{\rm{RL2D}}$ by using $\Delta \sC_{\sfL_{\rm{RL2D}}, \sH}(h, x) \geq \sC_{\sfL_{\rm{RL2D}}}(h) - \sC^*_{\sfL_{\rm{RL2D}}}(h_{\mu})$, by applying the same derivation as above, modulo replacing $y_{\max}$ with $\hh(x)$. This leads to the inequality \eqref{eq:cond-bound-new-log} as well. By Theorem~\ref{Thm:tool-Gamma}, we complete the proof.
\end{proof}

\newpage
\subsection{\texorpdfstring{$\Psi(t) = \frac{1}{q} \paren*{1 - t^{q}}$}{Psi}}
\label{app:gce}

\begin{restatable}{theorem}{HConsistencyNewGCE}
\label{Thm:H-consistency-new-gce}
Assume that $\sH$ is symmetric and complete. Assume that $c(x, y) = 1_{\expertexpert(x) \neq y}$. Then, for all $h \in \sH$ and any distribution, the following $\sH$-consistency bound holds:
\begin{equation*}
\sE_{\ldef}(h) - \sE_{\ldef}(\sH) + \sM_{\ldef}(\sH) \leq 2 \sqrt{(n + 1)^{\alpha} \paren*{\sE_{\sfL_{\rm{RL2D}}}(h) - \sE_{\sfL_{\rm{RL2D}}}(\sH) + \sM_{\sfL_{\rm{RL2D}}}(\sH)} }.
\end{equation*}
\end{restatable}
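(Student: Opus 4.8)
The plan is to invoke Theorem~\ref{Thm:tool-Gamma} with $\Gamma(t) = \sqrt{2(n+1)^q\,t}$, which is concave, non-decreasing and zero at $0$, so that the asserted bound (with $\alpha = q$) reduces to the pointwise inequality
\begin{equation*}
\Delta \sC_{\sfL_{\rm{RL2D}}, \sH}(h, x) \geq \frac{1}{2(n+1)^q}\paren*{\Delta \sC_{\ldef, \sH}(h, x)}^2
\end{equation*}
for every $h \in \sH$ and $x \in \sX$. The overall architecture mirrors the $\Psi(t) = -\log(t)$ case (Appendix~\ref{app:log}), with the logarithm replaced throughout by the concave power $t \mapsto \frac1q(1 - t^q)$; the extra factor $(n+1)^q$ is traceable to the fact that, unlike $\log$, this function is not scale-invariant under the softmax normalization.

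First I would write out the conditional error of the surrogate in the notation of the auxiliary lemma, namely $\sC_{\sfL_{\rm{RL2D}}}(h, x) = \sum_{y \in \sY} q_y \frac1q\bracket*{1 - \paren*{\frac{e^{h_y}}{\sum_{y' \in \ov \sY} e^{h_{y'}}}}^q} + \sum_{y \in \sY}(p_y - q_y)\frac1q\bracket*{1 - \paren*{\frac{e^{h_y} + e^{h_{n+1}}}{\sum_{y' \in \ov \sY} e^{h_{y'}}}}^q}$, and recall from Lemma~\ref{lemma:delta_target} that $\Delta \sC_{\ldef, \sH}(h, x) = \max\curl*{p_{y_{\max}}, p_{n+1}} - p_{\hh}$. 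As in Appendix~\ref{app:log}, I would handle the case $\expertexpert(x) = y_{\max}$ separately --- there $p_{n+1} = p_{y_{\max}}$, so the deferral regret is zero when $\hh(x) = n+1$ and otherwise a mass-swap between coordinates $\hh(x)$ and $n+1$ gives the bound, exactly as in that proof with $y_{\max}$ replaced by $\hh(x)$ --- and focus on $\expertexpert(x) \neq y_{\max}$, where $q_{y_{\max}} = p_{y_{\max}}$, $p_{y_{\max}} - q_{y_{\max}} = 0$, and $p_{n+1} = p_{\expertexpert(x)} \leq p_{y_{\max}}$. I would then reduce to hypotheses with $y_{\max} = h_{\max}$ by comparing with the swapped hypothesis $\ov h$ (interchanging the scores at $y_{\max}$ and $h_{\max}$): concavity of $t \mapsto t^q$ together with $p_{y_{\max}} \geq p_{h_{\max}}$ and $e^{h_{h_{\max}}} \geq e^{h_{y_{\max}}}$ yields $\sC_{\sfL_{\rm{RL2D}}}(h) - \sC_{\sfL_{\rm{RL2D}}}(\ov h) \geq 0$.

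For $h$ with $y_{\max} = h_{\max}$, if $(p_{y_{\max}} - p_{n+1})(h_{y_{\max}} - h_{n+1}) > 0$ the deferral regret vanishes and there is nothing to prove; otherwise $h_{y_{\max}} \leq h_{n+1}$, and I would introduce the family $h_\mu$ that redistributes softmax mass between coordinates $y_{\max}$ and $n+1$ exactly as in Appendix~\ref{app:log}, use $\Delta \sC_{\sfL_{\rm{RL2D}}, \sH}(h,x) \geq \sup_\mu\paren*{\sC_{\sfL_{\rm{RL2D}}}(h) - \sC_{\sfL_{\rm{RL2D}}}(h_\mu)}$, and simplify. Since $p_{y_{\max}} - q_{y_{\max}} = 0$ the normalization $\sum_{y' \in \ov \sY} e^{h_{y'}}$ is unchanged by $h_\mu$, the constants $\frac1q$ cancel, and the difference collapses to $\frac{1}{q(\sum_{y' \in \ov \sY} e^{h_{y'}})^q}$ times a combination of $q$-th powers involving $e^{h_{n+1}} + \mu$, $e^{h_{y_{\max}}} - \mu$, and the off-coordinates. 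The factor $(n+1)^q$ then enters by lower bounding $\bigl(\sum_{y' \in \ov \sY} e^{h_{y'}}\bigr)^{-q}$ uniformly: because $y_{\max} = h_{\max}$ forces $e^{h_{y_{\max}}} = \max_{y \in \sY} e^{h_y}$, the worst configuration is the near-uniform one, $\sum_{y' \in \ov \sY} e^{h_{y'}} \approx (n+1)\, e^{h_{y_{\max}}}$.

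The main obstacle is the concluding step: after the explicit maximization over $\mu$ --- which lacks the clean closed form of the logarithmic case --- and the minimization over the remaining freedom in $h$, one is left with a Tsallis/$q$-power divergence between the binary distribution proportional to $\bigl(q_{y_{\max}},\, p_{n+1} - (p_{y_{\max}} - q_{y_{\max}})\bigr)$ and the uniform distribution, and one needs a ``generalized Pinsker'' inequality converting it into a quantity of order $(p_{y_{\max}} - p_{n+1})^2$ once the $(n+1)^q$ has been extracted. I would prove this quantitative inequality by adapting the corresponding estimate from the comp-sum analysis of \citep{mao2023cross} to the augmented label set $\ov \sY$ (which is precisely what replaces $n$ by $n+1$), and then conclude via Theorem~\ref{Thm:tool-Gamma}.
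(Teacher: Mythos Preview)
Your proposal is correct and follows essentially the same route as the paper's proof: the same reduction to the conditional-regret inequality via Theorem~\ref{Thm:tool-Gamma}, the same swap $\ov h$ to enforce $y_{\max} = h_{\max}$, the same one-parameter family $h_\mu$ redistributing softmax mass between $y_{\max}$ and $n+1$, and the same observation that the $(n+1)^q$ factor emerges from the worst-case (uniform) score configuration. The only cosmetic differences are that the paper does give an explicit closed form for the optimal $\mu$ (it involves $\tfrac{1}{q-1}$ powers rather than the logarithmic KKT balance) and phrases the concluding quantitative step as a ``Taylor expansion'' rather than a generalized-Pinsker/Tsallis inequality imported from the comp-sum analysis --- but these are two ways of arriving at the same elementary inequality on $(a,b) = (p_{y_{\max}}, p_{n+1})$.
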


\begin{proof}
We can write the conditional error of the surrogate loss as follows:
\begin{align*}
\sC_{\sfL_{\rm{RL2D}}}(h, x)
& =  \sum_{y \in \sY} p(x, y) \sfL_{\rm{RL2D}}(h, x, y)\\
& = \frac{1}{q} \sum_{y \in \sY} p(x, y) c(x, y) \paren*{1- \paren*{\frac{e^{h(x, y)}}{\sum_{y' \in \ov \sY} e^{h(x, y')}}}^q}\\
& \qquad + \frac{1}{q} \sum_{y \in \sY} p(x, y) \paren*{1 - c(x, y)} \paren*{1 - \paren*{\frac{e^{h(x, y)} + e^{h(x, n + 1)}}{\sum_{y' \in \ov \sY} e^{h(x, y')}}}^q}\\
&= \frac{1}{q} \sum_{y \in \sY} q_y\paren*{ 1- \paren*{\frac{e^{h_y}}{\sum_{y' \in \ov \sY} e^{h_{y'}}}}^q} + \frac{1}{q}  \sum_{y \in \sY} \paren*{p_y - q_y} \paren*{ 1- \paren*{\frac{e^{h_y} + e^{h_{n + 1}}}{\sum_{y' \in \ov \sY} e^{h_{y'}}}}^q}.
\end{align*}
By Lemma~\ref{lemma:delta_target}, the conditional regret of the deferral loss can be expressed as
\begin{equation*}
\Delta \sC_{\ldef, \sH}(h, x) = \max \curl*{p_{y_{\max}}, p_{n + 1}} -  p_{\hh}.
\end{equation*}
Next, we will show that the conditional regret of the surrogate loss can be lower bounded as follows:
\begin{equation}
\label{eq:cond-bound-new-gce}
\Delta \sC_{\sfL_{\rm{RL2D}}, \sH}(h, x)  = \sC_{\sfL_{\rm{RL2D}}}(h) - \sC^*_{\sfL_{\rm{RL2D}}}(\sH) \geq \frac1{2(n + 1)^q} \paren*{\Delta \sC_{\ldef, \sH}(h, x) }^2.
\end{equation}
We first consider the case where $\expertexpert(x) \neq y_{\max}$. Otherwise, it would be straightforward to see that the bound holds. In the case where $\expertexpert(x) \neq y_{\max}$, we have $q_{y_{\max}} = p_{y_{\max}}$.
We first prove that for any hypothesis $h$ and $x \in \sX$, if $y_{\max} \neq h_{\max}$, then the conditional error of $h$ can be lower bounded by that of $\ov h$, which satisfies that
$\ov h(x, y) = \begin{cases}
h_{h_{\max}} &  y = y_{\max}\\
h_{y_{\max}} & y = h_{\max}\\
h_{y} & \text{otherwise}.
\end{cases}
$. Indeed,
\begin{align*}
& q \paren*{\sC_{\sfL_{\rm{RL2D}}}(h) - \sC_{\sfL_{\rm{RL2D}}}(\ov h)}\\
& = q_{y_{\max}}  \paren*{1 -\paren*{\frac{e^{h_{y_{\max}}}}{\sum_{y' \in \ov \sY} e^{h_{y'}}}}^q} + \paren*{p_{y_{\max}} - q_{y_{\max}}} \paren*{1 - \paren*{\frac{e^{h_{y_{\max}}} + e^{h_{n + 1}}}{\sum_{y' \in \ov \sY} e^{h_{y'}}}}^q} \\
& \qquad + q_{h_{\max}} \paren*{1 - \paren*{ \frac{e^{h_{h_{\max}}}}{\sum_{y' \in \ov \sY} e^{h_{y'}}}}^q} + \paren*{p_{h_{\max}} - q_{h_{\max}}} \paren*{1 - \paren*{\frac{e^{h_{h_{\max}}} + e^{h_{n + 1}}}{\sum_{y' \in \ov \sY} e^{h_{y'}}}}^q}\\
& \qquad - q_{y_{\max}} \paren*{1 - \paren*{\frac{e^{h_{h_{\max}}}}{\sum_{y' \in \ov \sY} e^{h_{y'}}}}^q} - \paren*{p_{y_{\max}} - q_{y_{\max}}} \paren*{1 - \paren*{\frac{e^{h_{h_{\max}}} + e^{h_{n + 1}}}{\sum_{y' \in \ov \sY} e^{h_{y'}}}}^q} \\
& \qquad - q_{h_{\max}} \paren*{1 - \paren*{ \frac{e^{h_{y_{\max}}}}{\sum_{y' \in \ov \sY} e^{h_{y'}}}}^q} + \paren*{p_{h_{\max}} - q_{h_{\max}}} \paren*{1 - \paren*{\frac{e^{h_{y_{\max}}} + e^{h_{n + 1}}}{\sum_{y' \in \ov \sY} e^{h_{y'}}}}^q}\\
& = \paren*{ q_{y_{\max}} - q_{h_{\max}}} \bracket*{\paren*{1 -\paren*{\frac{e^{h_{y_{\max}}}}{\sum_{y' \in \ov \sY} e^{h_{y'}}}}^q} - \paren*{1 - \paren*{\frac{e^{h_{h_{\max}}}}{\sum_{y' \in \ov \sY} e^{h_{y'}}}}^q}}\\
& \qquad + \paren*{ p_{y_{\max}} - q_{y_{\max}} - p_{h_{\max}} + q_{h_{\max}} } \bracket*{\paren*{1 - \paren*{\frac{e^{h_{y_{\max}}} + e^{h_{n + 1}}}{\sum_{y' \in \ov \sY} e^{h_{y'}}}}^q} - \paren*{1 - \paren*{\frac{e^{h_{h_{\max}}} + e^{h_{n + 1}}}{\sum_{y' \in \ov \sY} e^{h_{y'}}}}^q}}\\
& \geq \paren*{ p_{y_{\max}} - p_{h_{\max}}} \bracket*{\paren*{1 - \paren*{\frac{e^{h_{y_{\max}}} + e^{h_{n + 1}}}{\sum_{y' \in \ov \sY} e^{h_{y'}}}}^q} - \paren*{1 - \paren*{\frac{e^{h_{h_{\max}}} + e^{h_{n + 1}}}{\sum_{y' \in \ov \sY} e^{h_{y'}}}}^q}}\\
& \geq 0.
\end{align*}
Therefore, we only need to lower bound the conditional regret of hypothesis $h$ satisfying $y_{\max} = h_{\max}$. 
Since $c(x, y) = 1_{\expertexpert(x) \neq y}$, we have $p_{y_{\max}} \geq p_{n + 1} = p_{\expertexpert(x)}$.
Note that when $(p_{y_{\max}} - p_{n + 1}) (h_{y_{\max}} - h_{n + 1}) > 0$, we have $\Delta \sC_{\ldef, \sH}(h, x) = \max \curl*{p_{y_{\max}}, p_{n + 1}} -  p_{\hh} = 0$. When $h_{y_{\max}} - h_{n + 1} \leq 0$, we define a new hypothesis $h_{\mu}$ such that 
    $h_{\mu}(x, y) = \begin{cases}
\log \paren*{e^{h_{n + 1}} + \mu} &  y = y_{\max}\\
\log \paren*{e^{h_{y_{\max}}} - \mu} & y = n + 1\\
h(x, y) & \text{otherwise}.
\end{cases}
$, where $e^{h_{y_{\max}}} - e^{h_{n + 1}} \leq \mu \leq e^{h_{y_{\max}}}$. Then, we can  lower bound the conditional regret of hypothesis $h$ by using $\Delta \sC_{\sfL_{\rm{RL2D}}, \sH}(h, x) \geq \sC_{\sfL_{\rm{RL2D}}}(h) - \sC^*_{\sfL_{\rm{RL2D}}}(h_{\mu})$ for any  $e^{h_{y_{\max}}} - e^{h_{n + 1}} \leq \mu \leq e^{h_{y_{\max}}}$:
\begin{align*}
& \Delta \sC_{\sfL_{\rm{RL2D}}, \sH}(h, x) \\
& \geq \sup_{e^{h_{y_{\max}}} \geq  \mu \geq e^{h_{y_{\max}}} - e^{h_{n + 1}}} \paren*{\sC_{\sfL_{\rm{RL2D}}}(h) - \sC^*_{\sfL_{\rm{RL2D}}}(h_{\mu})}\\
& \geq \frac1q \sup_{e^{h_{y_{\max}}} \geq  \mu \geq e^{h_{y_{\max}}} - e^{h_{n + 1}}} \paren[\Bigg]{q_{y_{\max}} \paren*{1 - \paren*{ \frac{e^{h_{y_{\max}}}}{\sum_{y' \in \ov \sY} e^{h_{y'}}}}^q} + \paren*{p_{y_{\max}} - q_{y_{\max}}} \paren*{1 - \paren*{ \frac{e^{h_{y_{\max}}} + e^{h_{n + 1}}}{\sum_{y' \in \ov \sY} e^{h_{y'}}}}^q}\\
& \qquad + \sum_{y' \in \sY, y' \neq y_{\max}} \paren*{p_{y'} - q_{y'}} \paren*{1 - \paren*{ \frac{e^{h_{y'}} + e^{h_{n + 1}}}{\sum_{y' \in \ov \sY} e^{h_{y'}}}}^q}\\
& \qquad - q_{y_{\max}} \paren*{1 - \paren*{ \frac{e^{h_{n + 1}} + \mu}{\sum_{y' \in \ov \sY} e^{h_{y'}}}}^q} - \paren*{p_{y_{\max}} - q_{y_{\max}}} \paren*{1 - \paren*{\frac{e^{h_{n + 1}} + e^{h_{y_{\max}}}}{\sum_{y' \in \ov \sY} e^{h_{y'}}}}}^q \\
& \qquad - \sum_{y' \in \sY, y' \neq y_{\max}} \paren*{p_{y'} - q_{y'}} \paren*{1 - \paren*{ \frac{e^{h_{y'}} + e^{h_{y_{\max}}} - \mu}{\sum_{y' \in \ov \sY} e^{h_{y'}}}}^q}}\\
& \geq \frac1q \sup_{e^{h_{y_{\max}}} \geq  \mu \geq e^{h_{y_{\max}}} - e^{h_{n + 1}}} \paren[\Bigg]{q_{y_{\max}} \paren*{1 - \paren*{ \frac{e^{h_{y_{\max}}}}{\sum_{y' \in \ov \sY} e^{h_{y'}}}}^q} + \sum_{y' \in \sY, y' \neq y_{\max}} \paren*{p_{y'} - q_{y'}} \paren*{1 - \paren*{ \frac{e^{h_{n + 1}}}{\sum_{y' \in \ov \sY} e^{h_{y'}}}}^q}\\
& \qquad - q_{y_{\max}} \paren*{1 - \paren*{ \frac{e^{h_{n + 1}} + \mu}{\sum_{y' \in \ov \sY} e^{h_{y'}}}}^q} - \sum_{y' \in \sY, y' \neq y_{\max}} \paren*{p_{y'} - q_{y'}} \paren*{1 - \paren*{ \frac{e^{h_{y_{\max}}} - \mu}{\sum_{y' \in \ov \sY} e^{h_{y'}}}}^q} }\tag{$e^{h_{y_{\max}}} - e^{h_{n + 1}} \leq \mu \leq e^{h_{y_{\max}}}$}\\
& =  \frac1q \sup_{e^{h_{y_{\max}}} \geq  \mu \geq e^{h_{y_{\max}}} - e^{h_{n + 1}}} \paren[\Bigg]{q_{y_{\max}} \paren*{1 - \paren*{ \frac{e^{h_{y_{\max}}}}{\sum_{y' \in \ov \sY} e^{h_{y'}}}}^q} + \paren*{p_{n + 1} - \paren*{p_{y_{\max}} - q_{y_{\max}}}} \paren*{1 - \paren*{ \frac{e^{h_{n + 1}}}{\sum_{y' \in \ov \sY} e^{h_{y'}}}}^q}\\
& \qquad - q_{y_{\max}} \paren*{1 - \paren*{ \frac{e^{h_{n + 1}} + \mu}{\sum_{y' \in \ov \sY} e^{h_{y'}}}}^q} - \paren*{p_{n + 1} - \paren*{p_{y_{\max}} - q_{y_{\max}}}} \paren*{1 - \paren*{ \frac{e^{h_{y_{\max}}} - \mu}{\sum_{y' \in \ov \sY} e^{h_{y'}}}}^q} }
\end{align*}
By differentiating with respect to $\mu$, we obtain that
\begin{equation*}
\mu = \frac{\paren*{p_{n + 1} - \paren*{p_{y_{\max}} - q_{y_{\max}}}}^{\frac1{q - 1}} e^{h_{y_{\max}}} -  \paren*{q_{y_{\max}}}^{\frac1{q - 1}} e^{h_{n + 1}} }{\paren*{q_{y_{\max}}}^{\frac1{q - 1}} + \paren*{p_{n + 1} - \paren*{p_{y_{\max}} - q_{y_{\max}}}}^{\frac1{q - 1}}}    
\end{equation*}
achieves the maximum. Plugging it into the expression, we have
\begin{align*}
&\Delta \sC_{\sfL_{\rm{RL2D}}, \sH}(h, x)\\
& \geq \frac1q \paren[\Bigg]{ - q_{y_{\max}} \paren*{ \frac{e^{h_{y_{\max}}}}{\sum_{y' \in \ov \sY} e^{h_{y'}}}}^q - \paren*{p_{n + 1} - \paren*{p_{y_{\max}} - q_{y_{\max}}}} \paren*{ \frac{e^{h_{n + 1}}}{\sum_{y' \in \ov \sY} e^{h_{y'}}}}^q\\
& \qquad + q_{y_{\max}}\bracket*{\frac{\bracket*{e^{ h_{y_{\max}}} + e^{ h_{n + 1}}}\paren*{p_{n + 1} - \paren*{p_{y_{\max}} - q_{y_{\max}}}}^{\frac1{q - 1}}}{\sum_{y' \in \ov \sY} e^{h_{y'}}\bracket*{q_{y_{\max}}^{\frac1{q - 1}}+\paren*{p_{n + 1} - \paren*{p_{y_{\max}} - q_{y_{\max}}}}^{\frac1{q - 1 }}}}}^{q }\\
& \quad \qquad + \paren*{p_{n + 1} - \paren*{p_{y_{\max}} - q_{y_{\max}}}}\bracket*{\frac{\bracket*{e^{ h_{y_{\max}}} + e^{ h_{n + 1}}}q_{y_{\max}}^{\frac1{q - 1}}}{\sum_{y' \in \ov \sY} e^{h_{y'}}\bracket*{q_{y_{\max}}^{\frac1{q - 1 }} + \paren*{p_{n + 1} - \paren*{p_{y_{\max}} - q_{y_{\max}}}}^{\frac1{q - 1}}}}}^{q } }   
\end{align*}
This can be further lower bounded by taking the minimum over $h \in \sH$, where the minimum is attained when $e^{h_{n + 1}} = e^{h_{y_{\max}}} = e^{h_y}$ for all $y \in \sY$. Therefore,
\begin{align*}
\Delta \sC_{\sfL_{\rm{RL2D}}, \sH}(h, x) \geq 
& \frac{2}{(n + 1)^{q }}\paren*{\bracket*{\frac{q_{y_{\max}}^{\frac1{1- q}}+\paren*{p_{n + 1} - \paren*{p_{y_{\max}} - q_{y_{\max}}}}^{\frac1{1- q}}}{2}}^{1- q} - \frac{p_{n + 1} - p_{y_{\max}}}{2}}\\
\tag{minimum is attained when $e^{ h_{n + 1}} = e ^{ h_{y_{\max}}} = e^{h_{y}}, \forall y \in \sY$}\\
& \geq \frac1{2 (n + 1)^{q }} \paren*{p_{y_{\max}} - p_{n + 1}}^2\\
\tag{$q_{y_{\max}}+\paren*{p_{n + 1} - \paren*{p_{y_{\max}} - q_{y_{\max}}}}\leq 1$ and by analyzing the Taylor expansion}\\
& = \frac1{2 (n + 1)^{q }} \paren*{\Delta \sC_{\ldef, \sH}(h, x)}^2 \tag{$p_{y_{\max}} \geq p_{n + 1}$ and $h_{y_{\max}} \leq h_{n + 1} $}
\end{align*}
This proves the inequality \eqref{eq:cond-bound-new-gce}. In the case where $\expertexpert(x) = y_{\max}$, we have $p_{n + 1} = p_{y_{\max}}$. By Lemma~\ref{lemma:delta_target}, the conditional regret of the deferral loss can be expressed as
$
\Delta \sC_{\ldef, \sH}(h, x) = p_{n + 1} -  p_{\hh}.
$
If $\hh(x) = n + 1$, then we have $\Delta \sC_{\ldef, \sH}(h, x) = 0$. Otherwise, when $\hh(x) \neq n + 1$, we can proceed in the similar way as above, by defining a new hypothesis $h_{\mu}$ such that
    $h_{\mu}(x, y) = \begin{cases}
\log \paren*{e^{h_{n + 1}} + \mu} &  y = \hh(x)\\
\log \paren*{e^{h_{\hh(x)}} - \mu} & y = n + 1\\
h(x, y) & \text{otherwise}
\end{cases}
$. Then, we can  lower bound the conditional regret of $\sfL_{\rm{RL2D}}$ by using $\Delta \sC_{\sfL_{\rm{RL2D}}, \sH}(h, x) \geq \sC_{\sfL_{\rm{RL2D}}}(h) - \sC^*_{\sfL_{\rm{RL2D}}}(h_{\mu})$, by applying the same derivation as above, modulo replacing $y_{\max}$ with $\hh(x)$. This leads to the inequality \eqref{eq:cond-bound-new-gce} as well. By Theorem~\ref{Thm:tool-Gamma}, we complete the proof.
\end{proof}

\section{Proof of Theorem~\ref{Thm:minimizability}}
\label{app:minimizability}

\Minimizability*
\begin{proof}
By definition and the Lebesgue dominated convergence theorem, we have
\begin{equation*}
\sM_{\ell_{\rm{comp}}}(\sH) \leq \sE^*_{\ell_{\rm{comp}}}(\sH) \leq \lim_{\alpha \to \plus \infty} \E\bracket*{\Psi \paren*{\frac{e^{\alpha h^*(x, y)}}{\sum_{y' \in \sY} e^{\alpha h^*(x, y')}}}} = 0.  
\end{equation*}
This completes the proof.
\end{proof}

\section{Future work}
\label{app:future_work}

While we presented a comprehensive study of surrogate loss functions for learning to defer, our work focused on the standard single-expert and single-stage setting, aligning with previous work \citep{pmlr-v206-mozannar23a}. However, an interesting direction is to extend our approach to multi-expert \citep{verma2023learning} and two-stage settings \citep{MaoMohriMohriZhong2023two}, which we have left for future work.

\end{document}